\definecolor{DarkRed}{rgb}{0.368,0.097,0.078}
\declaretheoremstyle[
	    spaceabove=\topsep, 
	    spacebelow=\topsep, 
	    headfont=\normalfont\bfseries,
	    bodyfont=\normalfont\itshape,
	    notefont=\normalfont\bfseries,
	    notebraces={(}{)},
	    postheadspace=0.5em, 
	    headpunct={},
	    postfoothook=\noindent\ignorespaces
    ]{theorem}
\declaretheorem[style=theorem,numberwithin=section]{theorem}
\declaretheoremstyle[
	    spaceabove=\topsep, 
	    spacebelow=\topsep, 
	    headfont=\normalfont\bfseries,
	    bodyfont=\normalfont,
	    notefont=\normalfont\bfseries,
	    notebraces={(}{)},
	    postheadspace=0.5em, 
	    headpunct={},
	    postfoothook=\noindent\ignorespaces
    ]{definition}
\declaretheoremstyle[
        spaceabove=\topsep, 
        spacebelow=\topsep, 
        headfont=\normalfont\bfseries,
        bodyfont=\normalfont,
        notefont=\normalfont\bfseries,
        % notebraces={(}{)},
        notebraces={}{},
        postheadspace=0.5em, 
        qed=$\blacksquare$, 
        headpunct={},
        postfoothook=\noindent\ignorespaces
    ]{proofstyle}
\declaretheorem[style=proofstyle,numbered=no,name=Proof]{proof}
\declaretheorem[style=theorem,sibling=theorem,name=Lemma]{lemma}
\declaretheorem[style=theorem,sibling=theorem,name=Corollary]{corollary}
\declaretheorem[style=theorem,sibling=theorem,name=Proposition]{proposition}
\declaretheorem[style=theorem,numbered=no,name=Theorem]{theorem*}
\declaretheorem[style=theorem,numbered=no,name=Lemma]{lemma*}
\declaretheorem[style=theorem,numbered=no,name=Corollary]{corollary*}
\declaretheorem[style=theorem,numbered=no,name=Proposition]{proposition*}
\declaretheorem[style=theorem,numbered=no,name=Claim]{claim*}
\declaretheorem[style=theorem,numbered=no,name=Fact]{fact*}
\declaretheorem[style=theorem,numbered=no,name=Observation]{observation*}
\declaretheorem[style=theorem,numbered=no,name=Conjecture]{conjecture*}
\declaretheorem[style=definition,sibling=theorem,name=Definition]{definition}
\declaretheorem[style=definition,sibling=theorem,name=Remark]{remark}
\declaretheorem[style=definition,numbered=no,name=Definition]{definition*}
\declaretheorem[style=definition,numbered=no,name=Remark]{remark*}
\declaretheorem[style=definition,numbered=no,name=Example]{example*}
\declaretheorem[style=definition,numbered=no,name=Question]{question*}
\DeclareMathAlphabet{\mathbfsf}{\encodingdefault}{\sfdefault}{bx}{n}
\DeclareMathOperator*{\argmin}{arg\!\min}
\DeclareMathOperator*{\supp}{supp}
\let\Pr\relax
\DeclareMathOperator{\Pr}{\mathbb{P}}
\newcommand{\lr}[1]{\mathopen{}\left(#1\right)}
\newcommand{\LR}[1]{\mathopen{}\Big(#1\Big)}
\newcommand{\norm}[1]{\|#1\|}
\newcommand{\lrset}[1]{\mathopen{}\left\{#1\right\}}
\newcommand{\VC}{\operatorname{VC}}
\newcommand{\Risk}{\operatorname{Risk}}
\renewcommand{\phi}{\varphi}
\newcommand{\I}{\mathbb{I}}
\newcommand{\NN}{\mathbb{N}}
\newcommand{\N}{\mathbb{N}}
\newcommand{\paren}[1]{\left( #1 \right)}
\newcommand{\sqparen}[1]{\left[ #1 \right]}
\newcommand{\sett}[1]{\left\{ #1 \right\}}
\newcommand{\beq}{\begin{eqnarray*}}
\newcommand{\eeq}{\end{eqnarray*}}
\newcommand{\beqn}{\begin{eqnarray}}
\newcommand{\eeqn}{\end{eqnarray}}
\newcommand{\ben}{\begin{enumerate}}
\newcommand{\een}{\end{enumerate}}
\newcommand{\bit}{\begin{itemize}}
\newcommand{\eit}{\end{itemize}}
\newcommand{\calL}{\mathcal{L}}
\newcommand{\M}{\mathcal{M}}
\newcommand{\F}{\mathcal{F}}
\newcommand{\E}{\mathbb{E}}
\newcommand{\U}{\mathcal{U}}
\newcommand{\X}{\mathcal{X}}
\newcommand{\A}{\mathcal{A}}
\newcommand{\B}{\mathcal{B}}
\newcommand{\Y}{\mathcal{Y}}
\newcommand{\D}{\mathcal{D}}
\renewcommand{\H}{\mathcal{H}}
\renewcommand{\O}{\mathcal{O}}
\newcommand{\ERM}{\operatorname{ERM}}
\newcommand{\RERM}{\operatorname{RERM}}
\newcommand{\dualVC}{\operatorname{VC^*}}
\newcommand{\maj}{\operatorname{Majority}}
\newcommand{\VCU}{\operatorname{VC_{\U}}}
\newcommand{\RE}{\operatorname{RE}}
\newcommand{\AG}{\operatorname{AG}}
\newcommand{\risk}{\operatorname{R}}
\newcommand{\pac}{\operatorname{PAC}}
\newcommand{\grass}{\operatorname{GRASS}}
\title{A Characterization of Semi-Supervised Adversarially Robust PAC Learnability}
\author{
    Idan Attias \thanks{Department of Computer Science, Ben-Gurion University; \texttt{idanatti@post.bgu.ac.il.}} 
    \and  Steve Hanneke\thanks{Department of Computer Science, Purdue University; \texttt{steve.hanneke@gmail.com.}}
    \and Yishay Mansour \thanks{Blavatnik School of Computer Science, Tel Aviv University and Google Research;
  \texttt{mansour.yishay@gmail.com.}}
}
\begin{document}
\maketitle

\begin{abstract}%
We study the problem of learning an adversarially robust predictor to test time attacks in the \textit{semi-supervised} $\pac$ model.
We address the question of how many \textit{labeled} and \textit{unlabeled} examples are required to ensure learning.
We show that having enough unlabeled data (the size of a labeled sample that a fully-supervised method would require),
the labeled sample complexity can be arbitrarily smaller compared to previous works, and is sharply characterized by a \textit{different} complexity measure. We prove nearly matching upper and lower bounds on this sample complexity.
This shows that there is a significant benefit in semi-supervised robust learning even in the worst-case distribution-free model, and establishes
a gap between supervised and semi-supervised label
complexities which is known not to hold in standard non-robust $\pac$ learning.
\end{abstract}
\section{Introduction}
The problem of learning predictors that are immune to adversarial corruptions at inference time is central in modern machine learning. 
The phenomenon of fooling learning models by adding imperceptible perturbations to their input illustrates a basic vulnerability of learning-based models, and is named \textit{adversarial examples}.
We study the model of adversarially-robust $\pac$ learning, in a \textit{semi-supervised} setting. %

Adversarial robustness has been shown to 
significantly benefit from semi-supervised learning, mostly empirically, but also theoretically in some specific cases of distributions  \cite[e.g.,][]{carmon2019unlabeled,zhai2019adversarially,uesato2019labels,najafi2019robustness,alayrac2019labels,wei2020theoretical,levi2021domain}.
In this paper we ask the following natural question. To what extent can we benefit from \textit{unlabeled} data in the learning process of robust models in the general case? 
More specifically, what is the sample complexity in a distribution-free model?

Our semi-supervised model is formalized as follows.
Let $\H\subseteq \{0,1\}^\X$ be a hypothesis class.
We formalize the adversarial attack by a perturbation function $\U:\X\rightarrow 2^\X$, where $\U(x)$ is the set of possible perturbations (attacks) on $x$. In practice, we usually consider $\U(x)$ to be the $\ell_p$ ball centered at $x$. 
In this paper, we have no restriction on $\U$, besides $x\in\U(x)$.
The robust error of hypothesis $h$ on a pair $(x,y)$ is $\sup_{z\in \U(x)}\I\sqparen{h(z)\neq y}$.
The learner has access to both \textit{labeled} and \textit{unlabeled} examples drawn i.i.d. from unknown distribution $\D$, and the goal is to find $h\in\H$ with low robust error on a random point from $\D$. The sample complexity in semi-supervised learning has two parameters, the number of labeled examples and the number of unlabeled examples which suffice to ensure learning. The learner would like to restrict the amount of labeled data, which is significantly more expensive to obtain than unlabeled data.

In this paper, we show a gap between supervised and semi-supervised label complexities of adversarially robust learning in a distribution-free model. The label complexity in semi-supervised may be arbitrarily smaller compared to the supervised case, and is characterized by a different complexity measure.
Importantly, we are not using more data, just less labeled data.
The unlabeled sample size is the same as how much labeled data a fully-supervised method would require, and so this is a strict improvement.
This kind of gap is known not to hold in standard (non-robust) $\pac$ learning, this is
a unique property of robust learning.
    \paragraph{Background.}
    The following complexity measure $\VCU$ was introduced by \citet{montasser2019vc} (and denoted there by $\dim_{\U \times}$) as a candidate for determining the sample complexity of supervised robust learning. It was shown that indeed its finiteness is necessary, but not sufficient.
    This parameter is our primary object in this work, as we will show that it characterizes the labeled sample complexity of \textit{semi-supervised} robust $\pac$-learning.%

    \begin{definition}[$\VCU$-dimension]\label{def:vcu} A sequence of points  $\sett{x_1,\ldots,x_k}$ is $\U$-\textit{shattered} by $\H$ if $\forall y_1,\ldots,$ $y_k$ $\in $ $ \sett{0,1}$, $\exists h\in \H$ such that $\forall i\in [k],\forall z\in \U(x_i)$, $h(z)=y_i$. The $\VCU(\H)$ is largest integer $k$ for which there exists a sequence $\sett{x_1,\ldots,x_k}$ $\U$-shattered by $\H$.
    \end{definition}
    Intuitively, this dimension relates to shattering of the entire perturbation sets, instead of one point in the standard $\VC$-dimension.
    When $\U(x)=\sett{x}$, this parameter coincides with the standard $\VC$. Moreover, for any hypothesis class $\H$, it holds that $\VCU(\H)\leq \VC(\H)$, and the gap can be arbitrarily large. That is, there exist $\H_0$ such that $\VCU(\H_0)=0$ and $\VC(\H_0)=\infty$ (see Proposition \ref{prop:gap-vcu-dimu}).
    
     For an improved lower bound on the sample complexity, \citet[Theorem 10]{montasser2019vc} introduced the Robust Shattering dimension, denoted by $\mathrm{RS}_\U$ (and denoted there by $\dim_\U$). 
    \begin{definition}[$\mathrm{RS}_\U$-dimension]\label{def:dimu} A sequence $x_1,\ldots,x_k$ is said to be $\U\text{-robustly shattered}$ by $\F$ if \\
    $ \exists z^+_1,z^-_1,\ldots,z^+_k,z^-_k$ such that $x_i \in \U\paren{z^+_i}\cap \U\paren{z^-_i} \forall i\in[k]$ and $\forall y_1,\ldots,y_k \in \sett{+,-}, \exists f\in \F$ with $f(\zeta)=y_i$, $\forall \zeta\in \U\paren{z^{y_i}_{i}}, \forall i\in[k]$. The $\U$-\text{robust shattering dimension} $\mathrm{RS}_\U(\H)$ is defined as the maximum size of a set that is $\U$-robustly shattered by $\H$.
    \end{definition}
     Specifically, the lower bound on the sample complexity is
    $ \Omega\paren{\frac{\mathrm{RS}_\U}{\epsilon}+\frac{1}{\epsilon}\log\frac{1}{\delta}}$ for realizable robust learning, and $\Omega\paren{\frac{\mathrm{RS}_\U}{\epsilon^2}+\frac{1}{\epsilon^2}\log\frac{1}{\delta}}$
    for agnostic robust learning.
    They also showed upper bounds of
     $\Tilde{\O}\paren{\frac{\VC\cdot\VC^*}{\epsilon}+\frac{\log\frac{1}{\delta}}{\epsilon}}$\footnote{$\Tilde{\O}(\cdot)$ stands for omitting poly-logarithmic factors of $\VC,\VC^*,\VCU,\mathrm{RS}_\U,1/\epsilon,1/\delta$.} in the realizable case and $\Tilde{\O}\paren{\frac{\VC\cdot\VC^*}{\epsilon^2}+\frac{\log\frac{1}{\delta}}{\epsilon^2}}$ in the agnostic case, where $\VC^*$ is the dual $\VC$ dimension (definitions are in \cref{app:prelim}).
    \citet{montasser2019vc} showed that for any $\H$, $\VCU(\H)\leq \mathrm{RS}_\U(\H) \leq\VC(\H)$, and there can be an arbitrary gap between them. Specifically, there exists $\H_0$ with $\VC_\U(\H_0)=0$ and $\mathrm{RS}_\U(\H_0)=\infty$,
    and there exists $\H_1$ with $\mathrm{RS}_\U(\H_1)=0$ and $\VC(\H_1)=\infty$.
    \paragraph{Main contributions.}
    \begin{itemize}[leftmargin=0.3cm]
    \item In \cref{sec:knowing-support}, we first analyze the simple case where the support of the marginal distribution on the inputs is fully known to the learner. In this case, we show a tight bound of $\Theta\paren{\frac{\VCU(\H)}{\epsilon}+\frac{\log\frac{1}{\delta}}{\epsilon}}$ on the labeled complexity for learning $\H$.
    
    \item In \cref{sec:realizable}, we present a generic algorithm
    that can be applied both for the realizable and agnostic settings. We prove an upper bound and nearly matching lower bounds on the sample complexity in the realizable case.
    For semi-supervised robust learning, we prove a labeled sample complexity bound $\Lambda^{\mathrm{ss}}$ and compare to the sample complexity of supervised robust learning $\Lambda^{\mathrm{s}}$.  Our algorithm uses $\Lambda^{\mathrm{ss}} = \Tilde{\O}\paren{\frac{\VCU}{\epsilon}+\frac{1}{\epsilon}\log\frac{1}{\delta}}$
    \textit{labeled} examples and $\O(\Lambda^{\mathrm{s}})$ \textit{unlabeled}
    examples.  Recall that $\Lambda^{\mathrm{s}} =\Omega(\mathrm{RS}_\U)$, and since $\mathrm{RS}_\U$ can be arbitrarily larger than $\VCU$, 
    this means our labeled sample complexity represents a strong improvement over the sample complexity of supervised learning.
    \item In \cref{subsec:agnostic}, we prove upper and lower bounds on the sample complexity in the agnostic setting. We reveal an interesting structure, which is inherently different than the realizable case.
    Let $\eta$ be the minimal agnostic error. 
    If  we allow an error of $3\eta+\epsilon$, it is sufficient for our algorithm to have $\Lambda^{\mathrm{ss}}=\Tilde{\O}\paren{\frac{\VCU}{\epsilon^2}+\frac{\log\frac{1}{\delta}}{\epsilon^2}}$ \textit{labeled} examples and $\O(\Lambda^{\mathrm{s}})$ \textit{unlabeled} examples (as in the realizable case). If we insist on having error $\eta+\epsilon$, then there is a lower bound of $\Lambda^{\mathrm{ss}}=\Omega\paren{\frac{\mathrm{RS}_\U}{\epsilon^2}+\frac{1}{\epsilon^2}\log\frac{1}{\delta}}$ labeled examples.
    Furthermore, an error of $(\frac{3}{2}-\gamma)\eta+\epsilon$ is unavoidable if the learner is restricted to $\O(\VCU)$ labeled examples, for any $\gamma>0$.
    We also show that \textit{improper} learning is necessary, similar to the supervised case. 
    We summarize the results in \cref{fig:results} showing for which labeled and unlabeled samples we have a robust learner.
    
    \item The above results show that there is a significant benefit in semi-supervised robust learning. 
    For example, take $\H_0$ with $\VCU(\H_0)=0$ and $\mathrm{RS}_\U(\H_0)=n$. 
    The labeled sample size for learning $\H_0$ in supervised learning is $\Omega(n)$.
    In contrast, in semi-supervised learning our algorithms requires only $\O(1)$ \textit{labeled} examples and $\O(n)$ \textit{unlabeled} examples. We are not using more data, just less labeled data.
    Note that $n$ can be arbitrarily large.
    \item A byproduct of our result is that if we assume that the distribution is robustly realizable by a hypothesis class (i.e., there exist a hypothesis with zero robust error) then, with respect to the \underline{non-robust} loss (i.e., the standard $0$-$1$ loss) we can learn with only $\Tilde{\O}\paren{\frac{\VCU(\H)}{\epsilon}+\frac{\log\frac{1}{\delta}}{\epsilon}}$  labeled examples, even if the $\VC$ is infinite. Recall that there exists $\H_0$ with $\VC_\U(\H_0)=0$, $\mathrm{RS}_\U(\H_0)=\infty$ and $\VC(\H_0)=\infty$.
    Learning linear functions with margin is a special case of this data-dependent assumption. 
    Moreover, we show that this is obtained only by \textit{improper} learning. (See \cref{sec:improved-01-loss}.)
    \end{itemize}
    
    \begin{SCfigure}[]\label{fig:results}
    \centering
    \caption{Summary of the sample complexity regimes for semi-supervised robust learning, for the realizable model and the agnostic model with error $3\eta+\epsilon$, where $\eta$ is the minimal agnostic error in the hypothesis class.
    
     Obtaining an error of $\eta+\epsilon$ 
     requires at least $\mathrm{RS}_\U$ labeled examples, as in the supervised case. 
     
     $\Lambda^\mathrm{s}$ denotes the sample complexity of supervised robust learning. It is an open question whether  $\Lambda^\mathrm{s}$ equals $\mathrm{RS}_\U$.
    }
    \includegraphics[width=0.47\textwidth]%
    {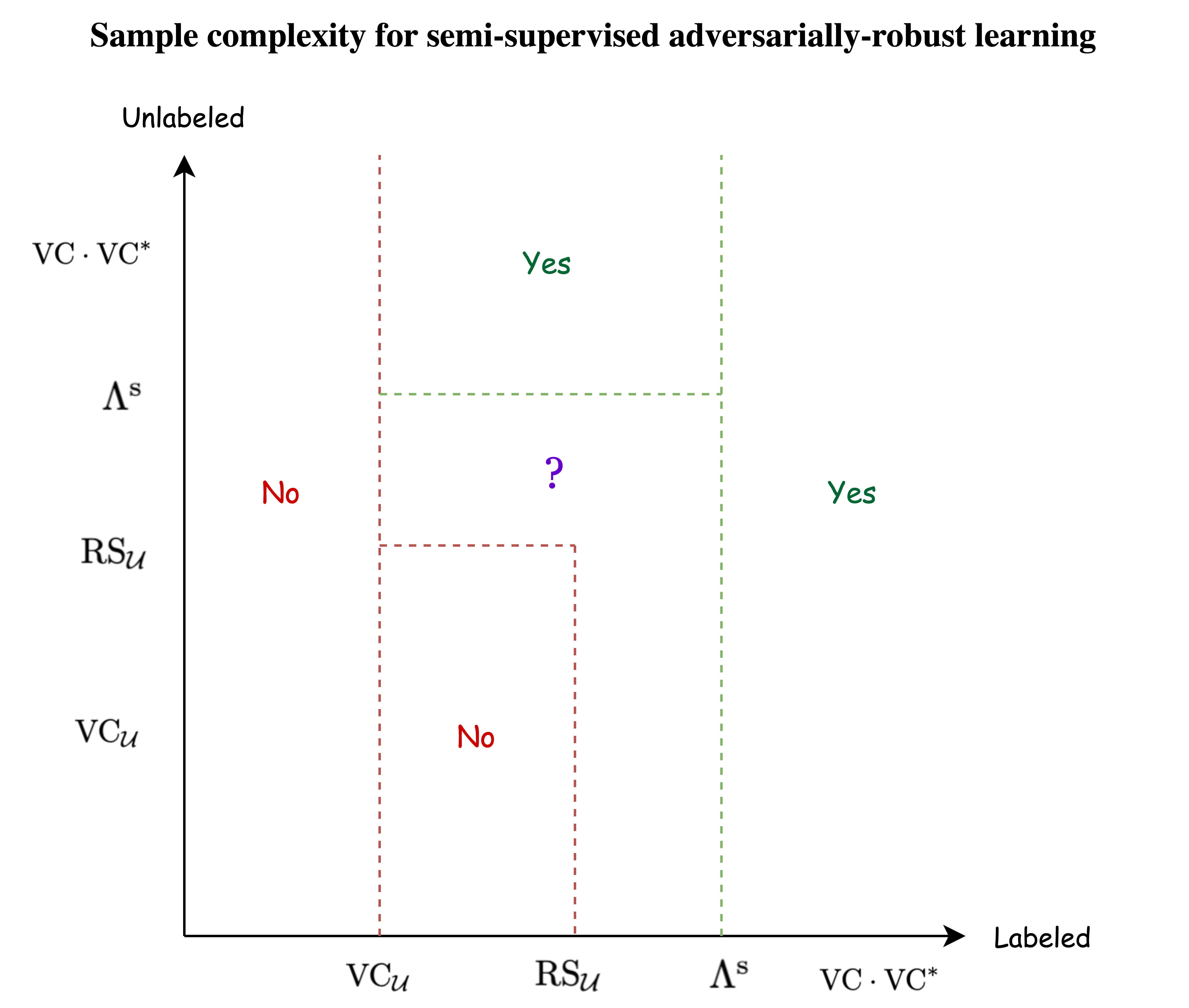}
    \end{SCfigure}
      \paragraph{Related work.}
      \noindent{\it Adversarially robust learning.}
      The work of \citet{montasser2019vc} studied the setting of fully-supervised robust $\pac$ learning. In this paper, we propose a semi-supervised method with a significant improvement on the labeled sample size. We show that the labeled and unlabeled sample complexities are controlled by different complexity measures.
      Adversarially robust learning has been extensively studied in several supervised learning models
    \cite[e.g.,][]{feige2015learning,schmidt2018adversarially,khim2018adversarial,yin2019rademacher,cullina2018pac,attias2019improved,kontorovich2021fat,attias2022adversarially,awasthi2021calibration,montasser2021transductive,montasser2020efficiently,montasser2020reducing,montasser2021adversarially,ashtiani2020black,awasthi2021existence,dan2020sharp,awasthi2020adversarial,bhattacharjee2021sample,xing2021adversarially,ashtiani2022adversarially}.
    For semi-supervised robust learning, \citet{ashtiani2020black} showed that under some assumptions, robust $\pac$ learning is possible with $\O\lr{\VC(\H)}$ labeled examples and additional unlabeled samples.
     \citet{carmon2019unlabeled} studied a robust semi-supervised setting where the distribution is a mixture of Gaussians and the hypothesis class is linear separators.

      \noindent{\it Semi-supervised (non-robust) learning.}
      There is substantial interest
        in semi-supervised (non-robust) learning, and many contemporary practical problems significantly benefit from it \cite[e.g.,][]{DBLP:reference/algo/Blum16,chapelle2009semi,zhu2009introduction}.
        This was formalized in theoretical frameworks.
    \citet{urner2011access} suggested a semi-supervised learning (non-robust) framework, with an algorithmic idea that is similar to our method. Their framework consists of two steps; using labeled data to learn a classifier with small error (not necessarily a member of the target class $\H$), and then labeling an unlabeled input sample in order to use a fully-supervised proper learner. They investigate scenarios where saving of labeled examples occurs. 
    In our paper, we are interested in the robust loss function. We use labeled data in order to learn a classifier (with the 0-1 loss function) from a class with a potentially smaller complexity measure, then we label an unlabeled input sample, and use a fully-supervised method using the robust loss function. The sample complexity of learning the robust loss class is controlled by a larger complexity measure. Fortunately, this affects our unlabeled sample size and not the labeled sample size as in the fully-supervised setting.
    \citet{gopfert2019can} studied circumstances where the learning rate can be improved given unlabeled data. \citet{darnstadt2013unlabeled} showed that the label complexity gap between the semi-supervised and the fully supervised setting
    can become arbitrarily large for concept classes of infinite $\VC$-dimension, and that this gap is bounded when a function class contains the constant zero and the constant one functions.
    \citet{balcan2010discriminative,balcan200621} introduced an 
    augmented version of the $\pac$ model designed for semi-supervised learning and analyzed when unlabeled data can help. The main idea is to augment the notion of learning a concept class, with a notion of compatibility between a function and the data distribution that we hope the target function will satisfy. 
    \section{Preliminaries}\label{sec:prelim}
    Let $\X$ be the instance space, $\Y$ a label space, and  $\H\subseteq\Y^{\X}$ a hypothesis class.
    A perturbation function $\U: \X \rightarrow 2^{\X}$ maps an input to a set $\U(x)\subseteq \X$. 
    Denote the 0-1 loss of hypothesis $h$ on $(x,y)$ by $\ell_{0\text{-}1}(h;x,y)=\I\sqparen{h(x)\neq y}$, and the robust loss with respect to $\U$ by $\ell_{\U}(h;x,y)=\underset{z\in \U(x)}{\sup}\I\sqparen{h(z)\neq y}$.
    Denote the support of a distribution $\D$ over $\X\times\Y$ by $\supp(\D)=\sett{(x,y)\in \X\times\Y:\D(x,y)>0}$. Denote the marginal distribution  $\D_{\X}$ on $\X$ and its support by $\supp(\D_{\X})=\sett{x\in\X:\D(x,y)>0}$.
     Define the \textit{robust risk} of a hypothesis $h\in \H$ with respect to distribution $\D$ over $\X \times \Y$,
    $$\risk_{\U}\paren{h;\D}=\E_{\paren{x,y}\sim \D}\sqparen{\ell_\U(h;x,y)}=\E_{\paren{x,y}\sim \D}\sqparen{\sup_{z\in \U(x)}\I\sqparen{h(z)\neq y}}.$$
    The approximation error of $\H$ on $\D$, namely, the optimal robust error achievable by a hypothesis in $\H$ on $\D$ is denoted by,
    $$\risk_{\U}(\H;\D)=\inf_{h\in\H}\risk_{\U}\paren{h;\D}.$$
    We say that a distribution $\D$ is \textit{robustly realizable} by a class $\H$ if $\risk_{\U}(\H;\D)=0$.
    
    Define the \textit{empirical robust risk} of a hypothesis $h\in \H$ with respect to a sequence $S\in  \paren{\X\times \Y}^*,$ 
    $$\widehat{\risk}_{\U}\paren{h;S}
    =
     \frac{1}{|S|}\sum_{\paren{x,y}\in S}\ell_\U(h;x,y)
    =
    \frac{1}{|S|}\sum_{\paren{x,y}\in S}\sqparen{\sup_{z\in \U(x)}\I\sqparen{h(z)\neq y}}.$$
    The \textit{robust empirical risk minimizer} learning algorithm $\RERM:\paren{\X\times\Y}^*\rightarrow \H$ for a class $\H$ on a sequence $S$ is defined by
    $$\RERM_\H(S)\in\argmin_{h\in\H}\widehat{\risk}_{\U}\paren{h;S}.$$
    
    When the perturbation function is the identity, $\U(x)=\{x\}$, we recover the standard notions. The \textit{risk} of a hypothesis $h\in \H$ with respect to distribution $\D$ over $\X \times \Y$ is defined by
    $\risk\paren{h;\D}=\E_{\paren{x,y}\sim \D}\sqparen{\ell_{0\text{-}1}(h;x,y)}=\E_{\paren{x,y}\sim \D}\sqparen{\I\sqparen{h(x)\neq y}},$ 
    and the \textit{empirical risk} of a hypothesis $h\in \H$ with respect to a sequence $S\in  \paren{\X\times \Y}^*$ is defined by 
    $\widehat{\risk}\paren{h;S}
    =
     \frac{1}{|S|}\sum_{\paren{x,y}\in S}\ell_{0\text{-}1}(h;x,y)
    =
    \frac{1}{|S|}\sum_{\paren{x,y}\in S}\sqparen{\I\sqparen{h(x)\neq y}}.$
    The \textit{empirical risk minimizer} learning algorithm $\ERM:\paren{\X\times\Y}^*\rightarrow \H$ for a class $\H$ on a sequence $S$ is defined by
    $\ERM_\H(S)\in\argmin_{h\in\H}\widehat{\risk}\paren{h;S}.$

    A learning algorithm $\A :\paren{\X\times\Y}^*\rightarrow \Y^{\X}$ for a class $\H$ is called \textit{proper} if it always outputs a hypothesis in $\H$, otherwise it is called \textit{improper}.
   
   \paragraph{Realizable robust $\pac$ learning.}We define the supervised and semi-supervised settings.
   \begin{definition}
    [Realizable robust $\pac$ learnability] For any $\epsilon,\delta\in (0,1)$, the sample complexity of realizable robust $(\epsilon,\delta)$-$\pac$ learning for a class $\H$, with respect to perturbation function $\U$, denoted by $\Lambda_{\RE}(\epsilon,\delta,\H,\U)$, is the smallest integer $m$ for which there exists a learning algorithm $\A :\paren{\X\times\Y}^*\rightarrow \Y^{\X}$, such that for every distribution $\D$ over $\X\times\Y$ robustly realizable by $\H$, namely $\risk_{\U}\paren{\H;D}=0$, for a random sample $S \sim \D^m$, it holds that 
    $$\Pr\paren{\risk_\U\paren{\A(S);D}\leq \epsilon}>1-\delta.$$
    If no such $m$ exists, define $\Lambda_{\RE}(\epsilon,\delta,\H,\U) = \infty$,
    and $\H$ is not robustly $(\epsilon,\delta)$-$\pac$ learnable with respect to $\U$.
   \end{definition}
    For the standard (non-robust) learning with the 0-1 loss function, we omit the dependence on $\U$ and denote the sample complexity of class $\H$ by $\Lambda_{\RE}(\epsilon,\delta,\H)$.
    \begin{definition}[Realizable semi-supervised robust $\pac$ learnability]
    A hypothesis class $\H$ is semi-supervised realizable robust $(\epsilon,\delta)$-$\pac$ learnable, with respect to perturbation function $\U$,  if for any $\epsilon,\delta\in (0,1)$, there exists  $m_{u},m_{l}\in \N \cup \sett{0}$, and a learning algorithm $\A :\paren{\X\times\Y}^* \cup \paren{\X}^* \rightarrow \Y^{\X}$, such that for every distribution $\D$ over $\X\times\Y$ robustly realizable by $\H$, namely $\risk_{\U}\paren{\H;D}=0$, for random samples $S^l \sim \D^{m_l}$ and $S^{u}_{\X} \sim \D_{\X}^{m_u}$, it holds that 
    $$\Pr\paren{\risk_\U\paren{\A(S^l,S^{u}_{\X});D}\leq \epsilon}>1-\delta.$$
    \end{definition}
    The sample complexity $\M_{\RE}{\paren{\epsilon,\delta,\H,\U}}$ includes all such pairs $(m_u,m_l)$. 
    If no such $(m_u,m_l)$ exist, then $\M_{\RE}(\epsilon,\delta,\H,\U) = \emptyset$. 
    \paragraph{Agnostic robust $\pac$ learning.}In this case we have $\risk_{\U}\paren{\H;\D}>0$, and we would like to compete with the optimal $h\in\H$.
    We add a parameter to the sample complexity, denoted by $\eta$, which is the optimal robust error of a hypothesis in $\H$, namely $\eta = \risk_{\U}\paren{\H;\D}$.
    We say that a function $f$ is $(\alpha,\epsilon)$-optimal if $\risk_\U\paren{f;D}\leq \alpha\eta+\epsilon$.
      
    \begin{definition}[Agnostic robust $\pac$ learnability]
    For any $\epsilon,\delta\in (0,1)$, the sample complexity of agnostic robust $(\alpha,\epsilon,\delta)$-$\pac$ learning 
    for a class $\H$, with respect to perturbation function $\U$, denoted by $\Lambda_{\AG}(\alpha,\epsilon,\delta,\H,\U,\eta)$, is the smallest integer $m$, for which there exists a learning algorithm $\A:\paren{\X\times\Y}^* \rightarrow \Y^{\X}$, such that for every distribution $\D$ over $\X\times\Y$, for a random sample $S\sim \D^m$, it holds that
    $$\Pr\paren{\risk_\U\paren{\A(S);D}\leq \alpha\inf_{h\in\H}\risk_{\U}\paren{h;\D} + \epsilon} >1-\delta.$$

    If no such $m$ exists, define $\Lambda_{\AG}(\alpha,\epsilon,\delta,\H,\U,\eta) = \infty$, and $\H$ is not robustly $(\alpha,\epsilon,\delta)$-$\pac$ learnable in the agnostic setting with respect to $\U$.
    Note that for $\alpha=1$ we recover the standard agnostic definition, our notation allows for a more relaxed approximation.
    \end{definition}  
    Analogously, we define the semi-supervised case.
    \begin{definition}[Agnostic semi-supervised robust $\pac$ learnability]
    A hypothesis class $\H$ is semi-supervised agnostically robust $(\alpha,\epsilon,\delta)$-$\pac$ learnable, with respect to perturbation function $\U$, if for any $\epsilon,\delta\in (0,1)$, there exists  $m_{u},m_{l}\in \N \cup \sett{0}$, and a learning algorithm $\A :\paren{\X\times\Y}^* \cup \paren{\X}^* \rightarrow \Y^{\X}$, such that for every distribution $\D$ over $\X\times\Y$, for random samples $S^l \sim \D^{m_l}$ and $S^{u}_{\X} \sim \D_{\X}^{m_u}$, it holds that 
    
    $$\Pr\paren{\risk_\U\paren{\A(S^l,S^{u}_{\X});\D}\leq \alpha\inf_{h\in\H}\risk_{\U}\paren{h;\D}+\epsilon}>1-\delta.$$
    
    The sample complexity $\M_{\AG}(\alpha,\epsilon,\delta,\H,\U,\eta)$ includes all such pairs $(m_u,m_l)$. 
    If no such $(m_u,m_l)$ exist, then $\M_{\AG}(\alpha,\epsilon,\delta,\H,\U,\eta) = \emptyset$.
    \end{definition}
    \paragraph{Partial concept classes \citep{alon2021theory}.} Let a partial concept class $\H\subseteq\sett{0,1,\star}^\X$. For $h\in\H$ and input $x$ such that $h(x)=\star$, we say that $h$ is undefined on $x$.
    The support of a partial hypothesis $h:\X\rightarrow \sett{0,1,\star}$ is the preimage of $\sett{0,1}$, formally, $h^{-1}(\sett{0,1})=\sett{x\in\X: h(x)\neq \star}$.
    The main motivation of introducing partial concepts classes, is that data-dependent assumptions can be modeled in a natural way that extends the classic theory of total concepts.
    The $\VC$ dimension of a partial class $\H$ is defined as the maximum size of a shattered set $S\subseteq \X$, where $S$ is shattered by $\H$ if the projection of $\H$ on $S$ contains all possible binary patterns, $\sett{0,1}^S\subseteq \H|_{S}$.
    The $\VC$-dimension also characterizes verbatim the $\pac$ learnability of partial concept classes, even though uniform convergence does not hold in this setting.

    We use the notation $\Tilde{\O}(\cdot)$ for omitting poly-logarithmic factors of $\VC,\VC^*,\VCU,\mathrm{RS}_\U,1/\epsilon,1/\delta$.
    See \cref{app:prelim} for additional preliminaries on complexity measures, sample compression schemes, and partial concept classes.
    
    \section{Warm-up: knowing the support of the marginal distribution}\label{sec:knowing-support}
    
    In this section, we provide a tight bound on the labeled sample complexity when the support of marginal distribution is fully known to the learner, under the robust realizable assumption.
    Studying this setting gives an intuition for the general semi-supervised model.
    The main idea is that as long as we know the support of the marginal distribution, $\supp(\D_{\X})=\sett{x\in\X:\exists y\in \Y, \text{ s.t.  } \D(x,y)>0}$, we can restrict our search to a subspace of functions that are robustly self-consistent, $\H_{\U\text{-cons}}\subseteq \H$, where
    $$
    \H_{\U\text{-cons}}=\sett{h\in \H: \forall x\in \supp(\D_\X),
    \forall z,z'\in \U(x), h(z)=h(z')}
    .$$ 
    
    As long as the distribution is robustly realizable, i.e., $\risk_{\U}(\H;\D)=0$, we are guaranteed that the target hypothesis belongs to $\H_{\U\text{-cons}}$. As a result, it suffices to learn the class $\H_{\U\text{-cons}}$ with the 0-1 loss function, in order to robustly learn the original class $\H$. 
    We observe that,
    $$
    \VC(\H_{\U\text{-cons}})= \VCU(\H) \leq \VC(\H).
    $$
    Moreover, there exits $\H_0$ with $\VCU(\H_0)=0$ and $\VC(\H_0)=\infty$ (see Proposition \ref{prop:gap-vcu-dimu}). Fortunately, moving from $\VC(\H)$ to $ \VCU(\H)$ implies a significant sample complexity improvement.
    Since $\supp(\D_\X) $ is known, we can now employ any algorithm for learning the hypothesis class $\H_{\U\text{-cons}}$.
    \footnote{See \citet[Chapter 3]{mohri2018foundations} for standard upper and lower bounds. In order to remove the superfluous $\log\frac{1}{\epsilon}$ factor of the standard uniform convergence based upper bound, $\O\paren{\frac{\VCU(\H)}{\epsilon}\log\frac{1}{\epsilon}+\frac{\log\frac{1}{\delta}}{\epsilon}}$, we can use the learning algorithm and its analysis 
    from \citet{hanneke2016optimal} that applies for any $\H$ and $\D$, or some other algorithms that are doing so while restricting the hypothesis class or the data distribution \citep[e.g.,][]{auer2007new,darnstadt2015optimal,hanneke2016refined,hanneke2009theoretical,long2003upper,gine2006concentration,bshouty2009using,balcan2013active}.}
    This leads eventually to robustly learn $\H$ with labeled sample complexity that scales linearly with $\VCU$ (instead of the $\VC$). Formally,
    \begin{theorem}\label{thm:known-marginal}
    For hypothesis class $\H$ and adversary $\U$, when the support of the marginal distribution $\D_{\X}$ is known, 
    the labeled sample complexity is $\Theta\paren{\frac{\VCU(\H)}{\epsilon}+\frac{\log\frac{1}{\delta}}{\epsilon}}$.
    \end{theorem}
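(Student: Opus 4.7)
The plan is to reduce the problem to standard (non-robust) PAC learning of a derived hypothesis class and then invoke known tight PAC bounds, together with a matching lower bound via $\U$-shattering.

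\textbf{Upper bound.} Since the learner knows $\supp(\D_\X)$, it can form the class $\H_{\U\text{-cons}}$ defined in the excerpt. First I would verify two facts: (i) every robustly realizing hypothesis in $\H$ (which exists because $\risk_\U(\H;\D)=0$) lies in $\H_{\U\text{-cons}}$, since a zero-robust-error hypothesis on $(x,y)\in\supp(\D)$ must be constant on $\U(x)$; and (ii) for every $h\in\H_{\U\text{-cons}}$ and every $(x,y)\in\supp(\D)$, $\ell_\U(h;x,y)=\ell_{0\text{-}1}(h;x,y)$, so $\risk_\U(h;\D)=\risk(h;\D)$. Next I would check the dimension identity $\VC(\H_{\U\text{-cons}})=\VCU(\H)$: a set $\{x_1,\dots,x_k\}\subseteq\supp(\D_\X)$ shattered by $\H_{\U\text{-cons}}$ in the ordinary sense is $\U$-shattered by $\H$ (because the witnessing hypotheses are constant on each $\U(x_i)$), and conversely any $\U$-shattered set inside $\supp(\D_\X)$ is VC-shattered by the corresponding restrictions, which all lie in $\H_{\U\text{-cons}}$. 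Having established this, I would feed the $m$ labeled examples (which are supported on $\supp(\D_\X)$ almost surely) into Hanneke's optimal PAC learner for $\H_{\U\text{-cons}}$ under the $0$-$1$ loss, obtaining in the realizable case a predictor $\widehat h$ with $\risk(\widehat h;\D)\leq\epsilon$ with probability $\geq 1-\delta$ from $m=O(\VCU(\H)/\epsilon+\log(1/\delta)/\epsilon)$ samples. Because Hanneke's algorithm outputs a majority vote of ERM hypotheses in $\H_{\U\text{-cons}}$, and each of those is constant on $\U(x)$ for $x\in\supp(\D_\X)$, the majority vote is also constant on $\U(x)$ there; therefore $\risk_\U(\widehat h;\D)=\risk(\widehat h;\D)\leq\epsilon$, yielding the upper bound.

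\textbf{Lower bound.} Let $\{x_1,\dots,x_d\}$ be a $\U$-shattered set of size $d=\VCU(\H)$. I would instantiate the standard VC lower bounds of Ehrenfeucht--Haussler--Kearns--Valiant (for the $\VCU/\epsilon$ term) and the standard confidence lower bound (for the $\log(1/\delta)/\epsilon$ term) using distributions $\D$ on $\{x_1,\dots,x_d\}\times\{0,1\}$. For any target labeling $(y_1,\dots,y_d)\in\{0,1\}^d$, the $\U$-shattering assumption supplies an $h\in\H$ that is identically $y_i$ on $\U(x_i)$, so the associated distribution (with marginal on $\{x_i\}$ and deterministic labels $y_i$) is robustly realizable. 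Moreover, for any predictor $f$, its robust error on these distributions is at least its $0$-$1$ error, because $x_i\in\U(x_i)$. Consequently the classical lower-bound constructions transfer verbatim and force $m=\Omega(\VCU(\H)/\epsilon+\log(1/\delta)/\epsilon)$, even when the algorithm is allowed to know $\supp(\D_\X)=\{x_1,\dots,x_d\}$.

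\textbf{Main obstacle.} The only nonroutine part is removing the superfluous $\log(1/\epsilon)$ factor that a naive uniform-convergence argument would produce; I rely on Hanneke's optimal learner (cited in the excerpt's footnote) and need to confirm that its output retains the $\U$-consistency property under the marginal's support so that the $0$-$1$ guarantee transfers to the robust loss. The argument above (closure of constant-on-$\U(x)$ functions under majority vote) handles exactly this point.
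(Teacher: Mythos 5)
Your proof takes essentially the same route the paper intends: restrict to the robustly self-consistent subclass $\H_{\U\text{-cons}}$, observe that its VC dimension on $\supp(\D_\X)$ equals $\VCU(\H)$, invoke Hanneke's optimal learner to avoid the $\log(1/\epsilon)$ factor, and obtain the matching lower bound from the classical VC lower-bound constructions applied to a $\U$-shattered set. The one point you flesh out more carefully than the paper — that the majority vote produced by Hanneke's learner inherits $\U$-consistency on $\supp(\D_\X)$ because constancy on each $\U(x)$ is closed under pointwise majority, so the $0$-$1$ guarantee transfers verbatim to the robust loss — is a genuine and worthwhile detail, since the learner is improper and this closure property is exactly what makes the reduction go through.
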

The following Proposition demonstrates that  semi-supervised robust learning requires much less labeled samples compared to the supervised counterpart. Recall the lower bound on the sample complexity of supervised robust learning, $ \Lambda_{\RE}(\epsilon,\delta,\H,\U)= \Omega\paren{\frac{\mathrm{RS}_\U(\H)}{\epsilon}+\frac{1}{\epsilon}\log\frac{1}{\delta}}$ given by \citet[Theorem 10]{montasser2019vc}. For completeness, we prove the following in \cref{app:knowing-support}.
    \begin{proposition}[\cite{montasser2019vc}, Proposition 9]\label{prop:gap-vcu-dimu}
    There exists a hypothesis class $\H_0$ such that $\VCU(\H_0)=0$, $\mathrm{RS}_\U(\H_0) = \infty$, and $\VC(\H_0) = \infty$.
\end{proposition}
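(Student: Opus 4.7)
The plan is to construct $\H_0$ from a three-point gadget repeated over $\NN$. I would take $\X$ to be three disjoint copies of $\NN$, namely points $p_n, q_n, x_n$ for $n \in \NN$, and define the perturbation function by $\U(p_n) = \sett{p_n, x_n}$, $\U(q_n) = \sett{q_n, x_n}$, and $\U(x_n) = \sett{p_n, q_n, x_n}$. Then I would let $\H_0$ be the class of all $h \in \sett{0,1}^\X$ satisfying $h(p_n) = 1$ and $h(q_n) = 0$ for every $n$, while $h(x_n)$ is left entirely unconstrained.

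To establish $\VCU(\H_0) = 0$, I would inspect each type of perturbation set and show that at least one constant label is unrealizable. The set $\U(x_n)$ contains the forced-$1$ point $p_n$ and the forced-$0$ point $q_n$ simultaneously, so no hypothesis in $\H_0$ is constant on $\U(x_n)$. The set $\U(p_n)$ contains $p_n$, which is forced to $1$, so the constant-$0$ labeling is excluded; symmetrically $\U(q_n)$ excludes the constant-$1$ labeling. Hence no $1$-element sequence of $\X$ can be $\U$-shattered.

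For $\mathrm{RS}_\U(\H_0) = \infty$ and $\VC(\H_0) = \infty$, I would show that the $k$-subset $\sett{x_1, \ldots, x_k}$ is both classically and robustly shattered for every $k$. For robust shattering, take the witnesses $z_i^+ := p_i$ and $z_i^- := q_i$; the containments $x_i \in \U(p_i) \cap \U(q_i)$ hold by construction. Given any labeling $(y_1, \ldots, y_k) \in \sett{+,-}^k$, the hypothesis $h \in \H_0$ that sets $h(x_i) = y_i$ for $i \leq k$ (and is extended arbitrarily on the remaining free coordinates) is constantly $y_i$ on $\U(z_i^{y_i})$: when $y_i = +$ the set $\U(p_i) = \sett{p_i, x_i}$ is entirely labeled $1$, and when $y_i = -$ the set $\U(q_i) = \sett{q_i, x_i}$ is entirely labeled $0$. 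The same subset witnesses $\VC(\H_0) \geq k$ since the values $h(x_i)$ are unrestricted in $\H_0$. Letting $k \to \infty$ completes both claims.

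The only nontrivial step is the $\VCU$ case analysis, which reduces to noticing that every perturbation set contains either one of the forced points or both, so at least one constant label is blocked. I do not anticipate a real obstacle; the proposition serves as a toy example demonstrating that $\VCU$, $\mathrm{RS}_\U$, and $\VC$ can decouple as strongly as possible, with the gap between $\VCU$ and $\mathrm{RS}_\U$ driving the separation between semi-supervised and supervised label complexity that the paper exploits.
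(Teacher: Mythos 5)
Your proof is correct, and it takes a genuinely different route from the paper's. The paper (following \citet{montasser2019vc}) uses a geometric construction: two sequences of Euclidean balls $\U(x_i)$, $\U(z_i)$ meeting in exactly one point per index, with hypotheses $h_b$ that flip which ball is labeled constantly $1$ depending on the bit $b_i$; $\VCU=0$ comes from the shared intersection point always receiving label $1$, so no ball can be labeled constantly $0$. Your construction instead uses an abstract three-point gadget $\{p_n,q_n,x_n\}$ per index with hard-wired values $h(p_n)=1$, $h(q_n)=0$ and a free coordinate $h(x_n)$. This is cleaner in two respects: it avoids any geometric realizability details (you never have to verify that such a configuration of balls exists in $\RR^d$, nor worry about how $\U(w)=B_r(w)$ behaves for points $w$ other than the designated centers), and it makes the mechanism transparent --- every perturbation set in your $\X$ contains at least one forced coordinate, which immediately kills $\VCU$, while the free coordinates $x_n$ simultaneously witness unbounded classical shattering and, via witnesses $z_i^+=p_i$, $z_i^-=q_i$, unbounded robust shattering. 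Both proofs are valid; yours is the more elementary and the one I would keep in mind when explaining why $\VCU$, $\mathrm{RS}_\U$, and $\VC$ can decouple.
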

We can now conclude the following separation result on supervised and semi-supervised label complexities.
    \begin{corollary}
    The hypothesis class in Proposition \ref{prop:gap-vcu-dimu} is not learnable in supervised robust learning (i.e., we need to see the entire data distribution).
    However, when $\supp(\D_{\X})$ is known, this class can be learned with $\O(\frac{1}{\epsilon}\log\frac{1}{\delta})$ labeled examples.
    \end{corollary}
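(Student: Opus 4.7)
The plan is to deduce the Corollary by combining two results already stated in the excerpt. For the first claim (non-learnability in supervised robust PAC), the plan is to appeal directly to the Montasser--Hanneke--Sabato lower bound cited just above the Corollary, namely $\Lambda_{\RE}(\epsilon,\delta,\H,\U)=\Omega(\mathrm{RS}_\U(\H)/\epsilon+\log(1/\delta)/\epsilon)$. Plugging in the class $\H_0$ from Proposition \ref{prop:gap-vcu-dimu}, which has $\mathrm{RS}_\U(\H_0)=\infty$, makes the lower bound vacuously infinite. Hence for any fixed $\epsilon,\delta\in(0,1)$ and every finite $m$ there exists a robustly realizable distribution $\D$ on which no algorithm receiving $m$ labeled i.i.d.\ samples can output a hypothesis with robust risk at most $\epsilon$ with probability at least $1-\delta$. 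This is exactly the statement that $\H_0$ is not supervised robustly PAC learnable; the parenthetical remark ``we need to see the entire data distribution'' is meant informally, and I would phrase the formal conclusion as $\Lambda_{\RE}(\epsilon,\delta,\H_0,\U)=\infty$.

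For the second claim, the plan is to invoke \cref{thm:known-marginal}, which gives a labeled sample complexity of $\Theta\bigl(\VCU(\H)/\epsilon+\log(1/\delta)/\epsilon\bigr)$ whenever $\supp(\D_\X)$ is known. Since $\VCU(\H_0)=0$ by Proposition \ref{prop:gap-vcu-dimu}, the first term vanishes, leaving $\O\bigl(\log(1/\delta)/\epsilon\bigr)$. I would briefly remark on why the $\VCU=0$ term is truly zero rather than merely $\O(1/\epsilon)$: the algorithm behind \cref{thm:known-marginal} operates on the restricted class $\H_{\U\text{-cons}}$ of robustly self-consistent hypotheses, whose standard $\VC$ dimension equals $\VCU(\H_0)=0$; a hypothesis class of $\VC$-dimension zero contains at most one function on $\supp(\D_\X)$, and since the distribution is robustly realizable by $\H$, any such consistent function achieves robust risk zero, so only the confidence term in the usual realizable-case bound survives.

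The two parts together constitute the separation. I do not anticipate a genuine obstacle here: both ingredients are in hand, and the proof is essentially a two-line ``plug in $\H_0$'' argument. The only drafting care needed is to be explicit about \emph{what} ``not learnable'' means formally (i.e., $\Lambda_{\RE}=\infty$ for any target accuracy/confidence), and to note that the upper bound in the known-support case applies because robust realizability by $\H_0$ ensures the target lies in $\H_{0,\U\text{-cons}}$, which is what lets \cref{thm:known-marginal} drive the $\VCU$ term to $0$.
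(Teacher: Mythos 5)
Your proposal is correct and follows exactly the route the paper intends: the first claim is the Montasser et al.\ lower bound instantiated with $\mathrm{RS}_\U(\H_0)=\infty$ (for every finite $m$ there is a robustly shattered set large enough to defeat the learner), and the second is \cref{thm:known-marginal} with $\VCU(\H_0)=0$. No gaps.
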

    In the next section, we prove a stronger separation in the general semi-supervised setting. The size of the labeled data required in the supervised case is lower bounded by $\mathrm{RS}_\U$, whereas in the semi-supervised case the \textit{labeled} sample complexity depends only on $\VCU$ and the \textit{unlabeled} data is lower bounded by $\mathrm{RS}_\U$.
    Moreover, note that in \cref{thm:known-marginal}, when $\supp(\D_{\X})$ is known,  we can use any proper learner. In \cref{sec:realizable} we show that in the general semi-supervised model this is not the case, and sometimes improper learning is necessary, similarly to supervised robust learning.
    
    \section{Near-optimal semi-supervised sample complexity}\label{sec:realizable}
    In this section we present our algorithm and its guarantees for the realizable setting. We also prove nearly matching lower bounds on the sample complexity. Finally, we show that improper learning is necessary in semi-supervised robust learning, similar to the supervised case. 
    
    We present a generic semi-supervised robust learner, that can be applied on both realizable and agnostic settings. The algorithm uses the following two subroutines. The first one is any algorithm for learning partial concept classes, which controls our \textit{labeled} sample size. (In \cref{app:algo-partial} we discuss in detail the algorithm suggested by \citet{alon2021theory}.) The second subroutine, is any algorithm for the agnostic adversarially robust supervised learning, which controls our \textit{unlabeled} sample size. (In  \cref{app:algo-agnostic-robust}
    we discuss in detail the algorithm suggested by \citet{montasser2019vc}.)
    Any progress on one of these problems improves directly the guarantees of our algorithm.
    We use the following definition that explains how to convert a total concept class into a partial one, in a way that preserves the idea of the robust loss function.

    \begin{definition}\label{def:partial-robust-class}
    Let a hypothesis class $\H\subseteq \sett{0,1}^{\X}$ and a perturbation function $\U: \X \rightarrow 2^{\X}$.
    For any $h\in\H$, we define a corresponding partial concept $h^\star:\X \rightarrow \sett{0,1,\star}$, and denote this mapping by $\phi(h)=h^\star$. For $x\in \X$, whenever $h$ is not consistent on the entire set $\,\U(x)$, i.e., $\exists z,z'\in \U(x), h(z)\neq h(z')$, define $h^\star(x)=\star$. Otherwise, $h$ is robustly self-consistent on $x$, i.e., $\forall z,z'\in \U(x), h(z)=h(z')$ and $h$ remains unchanged, $h^\star(x)=h(x)$. The corresponding partial concept class is defined by $\H^{\star}_{\U}= \sett{h^{\star}: \phi(h)=h^\star,\; \forall h \in \H}$.
    \end{definition}
    The main motivation for the above definition is the following. Fix a hypothesis $h$.
    For any point $x$, as defined above, the adversary can force a mistake on $h$, regardless of the prediction of $h$.
    We would like to mark such points as \emph{mistake}.
    We do this by defining a partial concept $h^\star$ and setting $h^\star(x)=\star$, which, for partial concepts, implies a mistake. The benefit of this preprocessing is that we reduce the complexity of the hypothesis class from $\VC$ to $\VCU$, which potentially can reduce the labeled sample complexity.

    We are now ready to describe the algorithm.
      \begin{algorithm}[H]
    \caption{Generic Adversarially-Robust Semi-Supervised ($\grass$) learner}\label{alg:generic-algo}
    \textbf{Input:} Labeled data set $S^{l}\sim \D^{m_l}$, unlabeled data set $S^{u}_{\X}\sim \D^{m_u}_{\X}$, hypothesis class $\H$, perturbation function $\U$, parameters $\epsilon$, $\delta$.\\
    \textbf{Algorithms used:} 
    $\pac$ learner $\A$ for partial concept classes,
    agnostic adversarially robust \underline{supervised} $\pac$ learner $\B$.
    \begin{enumerate}
        \item Given the class $\H$, construct the hypothesis class $\H^{\star}_\U$ using Definition \ref{def:partial-robust-class}.
        \item Execute the learning algorithm for partial concepts $\A$ on $\H^{\star}_\U$ and sample ${S^l}$,  with the $0$-$1$ loss and parameters $\frac{\epsilon}{3},\frac{\delta}{2}$. 
        Denote the resulting hypothesis $h_1$.
        
        \item Label the unlabeled data set $S^{u}_{\X}$ with $h_1$, denote the labeled sample by $S^u$.
        (On points where $h_1$ predicts $\star$, we can arbitrarily choose a label of $0$ or $1$.)
        \item Execute the agnostic adversarially robust supervised $\pac$ learner $\B$ 
        on $S^u$ with parameters $\frac{\epsilon}{3},\frac{\delta}{2}$.
        Denote the resulting hypothesis $h_2$.
    \end{enumerate}
    \textbf{Output:} $h_2$.
\end{algorithm}

    \paragraph{Algorithm motivation.} 
    The main idea behind the algorithm is the following. Given the class $\H^{\star}_\U$, we would like to find a hypothesis $h_1\in \H^{\star}_\U$ which has a small error, whose existence follows from our realizability assumption. The required sample size scales with $\VCU$, which is the complexity of $\H^{\star}_\U$, rather than $\VC$. This is where we make a significant gain in the labeled sample complexity. Note that $h_1$ does not guarantee a small robust error, although it does guarantee a small non-robust error. 
    We utilize an additional unlabeled sample for this task, which we label using $h_1$. If we would simply minimize the non-robust error on this sample we would simply get back $h_1$. The main insight is that we would like to minimize the robust error over this sample, which will result in hypothesis $h_2$. We now need to bound the robust error of $h_2$. The optimal function $h_{\mathrm{opt}}$ has only a slightly increased robust error on this sample, namely, at most on the sample points where it disagrees with $h_1$. Note that $h_1$ might have a large robust error due to the perturbation $\U$.
    However, a robust supervised PAC learner would return a hypothesis $h_2$ which has robust error similar to $h_{\mathrm{opt}}$, which is at most $\epsilon$.
    
    \paragraph{Algorithm outline and guarantees.} 
    In the first step, we convert $\H$ to $\H^{\star}_\U$. Then we employ a learning algorithm $\A$ for partial concepts on $\H^{\star}_\U$ with a labeled sample $S^l\sim \D^{m_l}$. The output of the algorithm is a function $h_1$ with $\epsilon/3$ on the \underline{0-1} error. Crucially, we needed for this step $|S^l|=\Tilde{\O}\lr{\VCU(\H)/\epsilon}$ labeled examples for learning the partial concept $\H^{\star}_\U$, since $\VC(\H^{\star}_\U)=\VCU(\H)$. So our labeled sample size is controlled by the sample complexity for learning partial concepts with the 0-1 loss.
In step 3, we label an independent unlabeled sample $S^{u}_{\X}\sim \D^{m_u}_{\X}$ with $h_1$, denote his labeled sample by $S^u$. Define a distribution $\Tilde{\D}$ over $\X\times\Y$ by $\Tilde{\D}(x,h_1(x))=\D_{\X}(x),$
    and so $S^u$ is an i.i.d. sample from $\Tilde{\D}$.
    We argue that the robust error of $\H$ with respect to $\Tilde{\D}$ is at most $\frac{\epsilon}{3}$, i.e., $\risk_{\U}(\H;\Tilde{\D})=\frac{\epsilon}{3}$. Indeed, the function with zero robust error on $\D$,  $h_{\text{opt}}\in\argmin_{h\in\H}\risk_{\U}(h;\D)$ has a robust error of at most $\frac{\epsilon}{3}$ on $\Tilde{\D}$.
    Finally, we employ an agnostic adversarially robust \underline{supervised} $\pac$ learner $\B$ for the class $\H$ on $S^u\sim \Tilde{\D}^{m_u}$, that should be of size of the sample complexity of agnostically robust learn $\H$ with respect to $\U$, when the optimal robust error of hypothesis from $\H$ on $\Tilde{\D}$ is at most $\frac{\epsilon}{3}$. 
    Moreover, the total variation distance between $\D$ and $\Tilde{\D}$ is at most $\frac{\epsilon}{3}$. 
    We are guaranteed that the resulting hypothesis $h_2$ has a \underline{robust} error of at most $\frac{\epsilon}{3}+\frac{\epsilon}{3}+\frac{\epsilon}{3} =\epsilon$ on $\D$. We conclude that a size of $|S^{u}_{\X}|=m_u =\Lambda_{\AG}\paren{1,\frac{\epsilon}{3},\frac{\delta}{2},\H,\U,\eta=\frac{\epsilon}{3}}$ unlabeled samples suffices, this completes the proof for \cref{thm:realizable-sample-compexity}. For a specific instantiation of such algorithm (\cite{montasser2019vc}), we deduce the sample complexity in \cref{thm:realizable-sample-compexity-cor}.
    A simple analysis of the latter yields a dependence of $\epsilon^2$ for the unlabeled sample size. However, by applying a suitable data-dependent generalization bound, we reduce this dependence to $\epsilon$. (Full proofs appear in \cref{app:realizable}).
    
    We now formally present the sample complexity of the generic semi-supervised learner for the robust realizable setting. First, in the case of using a generic agnostic robust supervised learner as a subroutine (step 4 in the algorithm). Then we deduce the sample complexity of a specific instantiation of such algorithm. 
    \begin{theorem}\label{thm:realizable-sample-compexity}
    For any hypothesis class $\H$ and adversary $\U$,
    algorithm $\grass$
    $(\epsilon,\delta)$-$\pac$ learns $\H$ with respect to the robust loss function, in the realizable robust  case, with samples of size
        \begin{align*}
            m_l= \O\paren{\frac{\VCU(\H)}{\epsilon}\log^2\frac{\VCU(\H)}{\epsilon}+\frac{\log\frac{1}{\delta}}{\epsilon}}
            \;,\;
            m_u = \Lambda_{\AG}\paren{1,\frac{\epsilon}{3},\frac{\delta}{2},\H,\U,\eta=\frac{\epsilon}{3}},
        \end{align*}
    where $\Lambda_{\AG}\paren{\alpha,\epsilon,\delta,\H,\U,\eta}$ is the sample complexity of adversarially-robust agnostic supervised $(\alpha,\epsilon,\delta)$-$\pac$ learning, such that $\eta$ is the error of the optimal hypothesis in $\H$, i.e., $\eta = \risk_{\U}\paren{\H;\D}$.
    \end{theorem}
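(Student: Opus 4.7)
The plan is to verify the four-step outline of $\grass$ preceding the statement, controlling the error introduced at each stage. First, I would observe that under robust realizability of $\D$ by $\H$, the induced partial concept class $\H^\star_\U$ from \cref{def:partial-robust-class} is realizable for the partial-concept $0$-$1$ loss on $\D$: if $h_{\mathrm{opt}}\in\H$ attains $\risk_\U(h_{\mathrm{opt}};\D)=0$, then for a.e.\ $(x,y)\in\supp(\D)$ the function $h_{\mathrm{opt}}$ is robustly self-consistent on $\U(x)$ with value $y$, and hence $\phi(h_{\mathrm{opt}})(x)=y$. I would also verify the identity $\VC(\H^\star_\U)=\VCU(\H)$ by unrolling the two shattering definitions. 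Consequently, invoking the partial-concept PAC learner $\A$ of \cref{app:algo-partial} (following \citet{alon2021theory}) with accuracy $\epsilon/3$ and confidence $\delta/2$ produces, from $m_l=\O\paren{\tfrac{\VCU(\H)}{\epsilon}\log^2\tfrac{\VCU(\H)}{\epsilon}+\tfrac{1}{\epsilon}\log\tfrac{1}{\delta}}$ labeled examples, a partial hypothesis $h_1$ whose partial-concept $0$-$1$ error on $\D$ is at most $\epsilon/3$, i.e.\ $\Pr_{(x,y)\sim\D}\sqparen{h_1(x)=\star\ \text{or}\ h_1(x)\neq y}\leq \epsilon/3$.

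Next I would label $S^u_\X$ with $h_1$ (using an arbitrary completion $\bar h_1$ on $\star$-inputs) and consider the self-induced distribution $\tilde\D$ defined by $\tilde\D(x,\bar h_1(x))=\D_\X(x)$. The key observation is that $h_{\mathrm{opt}}$ already witnesses $\risk_\U(\H;\tilde\D)\leq \epsilon/3$: on every $x$ where $h_1$ agrees with the true $\D$-label $y$ we have $\bar h_1(x)=y$ and $h_{\mathrm{opt}}(z)=y$ on all of $\U(x)$, while the remaining $x$-mass is exactly the partial-concept error of $h_1$. Conditional on $h_1$, the relabeled sample $S^u$ is i.i.d.\ from $\tilde\D$, so running the agnostic robust supervised PAC learner $\B$ on $S^u$ with parameters $(\alpha,\epsilon,\delta,\eta)=(1,\epsilon/3,\delta/2,\epsilon/3)$, of sample size $m_u=\Lambda_{\AG}(1,\epsilon/3,\delta/2,\H,\U,\epsilon/3)$, returns $h_2$ with $\risk_\U(h_2;\tilde\D)\leq \risk_\U(\H;\tilde\D)+\epsilon/3\leq 2\epsilon/3$ with probability at least $1-\delta/2$.

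Finally I would transfer this bound back to $\D$ via the elementary coupling inequality $\|\D-\tilde\D\|_{\mathrm{TV}}\leq \Pr_{(x,y)\sim\D}\sqparen{\bar h_1(x)\neq y}\leq \epsilon/3$, which yields $\risk_\U(h_2;\D)\leq \risk_\U(h_2;\tilde\D)+\epsilon/3\leq \epsilon$; a union bound over the failures of $\A$ and $\B$ gives overall confidence $1-\delta$. The main obstacle is not any single inequality but careful bookkeeping around $\star$ and the two-sample coupling: one must ensure that the partial-concept guarantee of $\A$ already charges $\star$-predictions as mistakes, so that the realizability of $\H$ on $\tilde\D$ at level $\epsilon/3$ and the total-variation gap between $\D$ and $\tilde\D$ are both controlled by the same $\epsilon/3$ budget delivered by the first step. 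Once this is handled, the labeled bound $m_l$ is a direct import of the partial-concept sample complexity of \citet{alon2021theory} under the identification $\VC(\H^\star_\U)=\VCU(\H)$, and $m_u$ is exactly the definition of $\Lambda_{\AG}$.
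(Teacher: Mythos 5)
Your proposal is correct and follows essentially the same route as the paper's proof: convert $\H$ to the partial class $\H^\star_\U$, learn it with a partial-concept learner to get $h_1$ with $0$-$1$ error $\le \epsilon/3$, pseudo-label the unlabeled sample to create $\tilde\D$, observe that $h_{\mathrm{opt}}$ certifies $\risk_\U(\H;\tilde\D)\le\epsilon/3$, run the agnostic robust supervised learner, and transfer back to $\D$ via the $\le\epsilon/3$ total-variation gap. The only cosmetic difference is that you make the $\star$-completion $\bar h_1$ and the coupling/TV step fully explicit, whereas the paper phrases the $\risk_\U(\H;\tilde\D)\le\epsilon/3$ bound through a pointwise inequality $\ell_\U(h_{\mathrm{opt}};x,h_1(x))\le\ell_\U(h_{\mathrm{opt}};x,y)+\ell_{0\text{-}1}(h_1;x,y)$ and then takes expectations — these are equivalent.
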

    \begin{remark}
    Note that if we simply invoke a $\pac$ learner (for total concept classes) on $\H$, with the 0-1 loss, instead of steps 1 and 2 in the algorithm, we would get a labeled sample complexity of roughly $\O\lr{\VC(\H)}$. This is already an exponential improvement upon previous results that require roughly $\O\lr{2^{\VC(\H)}}$ labeled samples. The purpose of using partial concept classes is to further reduce the labeled sample complexity to $\O\lr{\VCU(\H)}$.
    \end{remark}
    The following result follows by using the agnostic supervised robust learner suggested by \citet{montasser2019vc}.
    A simple analysis of the latter yields a dependence of $\epsilon^2$ for the unlabeled sample size. However, by applying a suitable data-dependent generalization bound, we reduce this dependence to $\epsilon$. 
    \begin{theorem}\label{thm:realizable-sample-compexity-cor}
        For any hypothesis class $\H$ and adversary $\U$,
        Algorithm $\grass$
        $(\epsilon,\delta)$-$\pac$ learns $\H$ with respect to the robust loss function, in the realizable robust  case, with samples of size
            \begin{align*}
            m_l= \O\paren{\frac{\VCU(\H)}{\epsilon}\log^2\frac{\VCU(\H)}{\epsilon}+\frac{\log\frac{1}{\delta}}{\epsilon}}
            \;,\;
            m_u = \Tilde{\O}\paren{\frac{\VC(\H)\VC^*(\H)}{\epsilon}+\frac{\log\frac{1}{\delta}}{\epsilon}}.
        \end{align*}
    \end{theorem}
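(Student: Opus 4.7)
The labeled-sample bound is inherited verbatim from \cref{thm:realizable-sample-compexity}: steps 1 and 2 of $\grass$ are unchanged between the two statements, and the partial-concept $\pac$ learner applied to $\H^{\star}_\U$ already yields the stated $m_l$ since $\VC(\H^{\star}_\U) = \VCU(\H)$. The real work is to bound $\Lambda_{\AG}\paren{1, \epsilon/3, \delta/2, \H, \U, \eta = \epsilon/3}$ by $\Tilde{\O}\paren{\VC(\H)\VC^*(\H)/\epsilon + \log(1/\delta)/\epsilon}$ when step 4 is instantiated with the agnostic robust supervised learner of \citet{montasser2019vc}. The plan is to exploit the fact that the reference error handed to step 4 is itself of order $\epsilon$: as argued in the outline of \cref{thm:realizable-sample-compexity}, the relabeled distribution $\Tilde{\D}$ satisfies $\risk_\U(\H;\Tilde{\D}) \leq \epsilon/3$, witnessed by any $h_{\mathrm{opt}} \in \argmin_{h\in\H}\risk_\U(h;\D)$. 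So I am really running an agnostic learner on a nearly-realizable instance, and I want a realizable-style rate $k/\epsilon$ rather than the generic agnostic rate $k/\epsilon^2$.

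I would recall that the Montasser et al.\ learner outputs its hypothesis via a sample compression scheme of size $k = \Tilde{\O}(\VC(\H)\VC^*(\H))$ that, up to lower-order terms, minimizes the empirical robust loss on $S^u$. A direct Hoeffding-style analysis of the standard compression-scheme generalization bound yields only the $\epsilon^{-2}$ rate. To upgrade to $\epsilon^{-1}$ I would substitute in its place a Bernstein-type (relative-deviation) generalization bound for compression schemes, of the shape: with probability at least $1-\delta/2$,
\begin{equation*}
\risk_\U(h_2;\Tilde{\D}) \;\leq\; \widehat{\risk}_\U(h_2;S^u) + \O\paren{\sqrt{\frac{\widehat{\risk}_\U(h_2;S^u)\,\paren{k\log(m_u/k)+\log(1/\delta)}}{m_u}} + \frac{k\log(m_u/k)+\log(1/\delta)}{m_u}}.
\end{equation*}
Since $\widehat{\risk}_\U(h_2;S^u)$ is, with high probability, at most $\widehat{\risk}_\U(h_{\mathrm{opt}};S^u)$ up to a $k\log(m_u)/m_u$ compression residual, and $\widehat{\risk}_\U(h_{\mathrm{opt}};S^u) = \O(\epsilon)$ by Chernoff on $\Tilde{\D}$, the displayed inequality becomes self-bounding in $\widehat{\risk}_\U(h_2;S^u)$ and resolves to $\risk_\U(h_2;\Tilde{\D}) \leq \epsilon/3$ as soon as $m_u = \Tilde{\O}(k/\epsilon + \log(1/\delta)/\epsilon)$.

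The main obstacle will be verifying that the specific Montasser et al.\ compression scheme admits a Bernstein-type bound rather than only the standard Littlestone--Warmuth Hoeffding-style bound; this is a standard but delicate refinement obtained via an empirical-Bernstein concentration argument applied to the residual loss of the compressed classifier. Once this is in hand, the proof closes by a union bound over the three failure events (step 2 partial-concept $\pac$, step 3 Chernoff on $\Tilde{\D}$, and step 4 compression Bernstein) at confidence $\delta/2$ each, together with the triangle estimate $\risk_\U(h_2;\D) \leq \risk_\U(h_2;\Tilde{\D}) + d_{\mathrm{TV}}(\D,\Tilde{\D}) \leq \epsilon/3 + \epsilon/3 + \epsilon/3 = \epsilon$, exactly as in the outline of \cref{thm:realizable-sample-compexity}.
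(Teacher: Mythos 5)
Your proposal matches the paper's proof essentially step for step: inherit $m_l$ from \cref{thm:realizable-sample-compexity}, observe that $\Tilde{\D}$ is nearly realizable with $\risk_\U(\H;\Tilde{\D})\leq\epsilon/3$, and replace the Hoeffding-style Littlestone--Warmuth compression bound inside the \citet{montasser2019vc} agnostic learner with an empirical-Bernstein compression bound (the paper's \cref{lem:sample-compression-bernstein}) to turn the $\epsilon^{-2}$ rate into $\epsilon^{-1}$ via a self-bounding inequality. The one caveat you flag --- whether the robust-loss compression scheme admits a Bernstein-type bound --- is exactly what the paper addresses by porting the $0$-$1$-loss argument of \citet[Lemma 42]{alon2021theory} and \citet{maurer2009empirical} to the robust loss verbatim, so this is the paper's route as well.
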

   We present nearly matching lower bounds for the realizable setting.
    The following Corollary stems from \cref{thm:known-marginal} and \citet[Theorem 10]{montasser2019vc}.
    \begin{corollary}
    For any $\epsilon,\delta\in (0,1)$, the sample complexity of realizable robust $(\epsilon,\delta)$-$\pac$ learning for a class $\H$, with respect to perturbation function $\U$ is
  \begin{align*}
    m_l= \Omega\paren{\frac{\VCU(\H)}{\epsilon}+\frac{\log\frac{1}{\delta}}{\epsilon}}
        \;,\;
         m_u = \infty,\;\;\; or  
    \;\;\; m_l+m_u= \Omega\paren{\frac{\mathrm{RS}_\U(\H)}{\epsilon}+\frac{\log\frac{1}{\delta}}{\epsilon}}.  
  \end{align*}
    \end{corollary}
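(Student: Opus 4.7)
The plan is to combine two independent lower bounds that together force any valid $(m_l, m_u)$ into one of the two stated regimes. The first component is the labeled-sample lower bound supplied by \cref{thm:known-marginal}: even when the learner is handed the entire support $\supp(\D_\X)$ of the marginal, realizable robust $\pac$ learning still demands $\Omega\paren{\VCU(\H)/\epsilon + \log(1/\delta)/\epsilon}$ labeled examples. I would next argue that any amount of unlabeled data can only reveal information about $\supp(\D_\X)$ and the marginal probabilities, so the known-support oracle setting is at least as informative as any semi-supervised setting, regardless of $m_u$. Consequently the $\VCU$ lower bound on $m_l$ transfers to the semi-supervised model for every choice of $m_u$, including $m_u=\infty$.

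The second component is a black-box reduction from semi-supervised to fully supervised learning. If $\H$ is realizable robust $(\epsilon,\delta)$-$\pac$ learnable in the semi-supervised model with $(m_l,m_u)$, then a supervised learner receiving $m_l+m_u$ labeled examples can simulate the semi-supervised algorithm by simply discarding the labels of any $m_u$ of them and feeding the remaining labeled/unlabeled split into the semi-supervised learner. Combined with the supervised realizable lower bound $\Omega\paren{\mathrm{RS}_\U(\H)/\epsilon + \log(1/\delta)/\epsilon}$ of \citet[Theorem 10]{montasser2019vc}, this gives $m_l+m_u=\Omega\paren{\mathrm{RS}_\U(\H)/\epsilon + \log(1/\delta)/\epsilon}$ whenever both $m_l$ and $m_u$ are finite. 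When $\mathrm{RS}_\U(\H)=\infty$, no finite $m_u$ can satisfy this inequality, so the only admissible possibility is $m_u=\infty$, which together with the first bound on $m_l$ yields precisely the first alternative in the corollary.

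The main thing to be careful about is ensuring the hard construction witnessing \cref{thm:known-marginal} remains hard even when the learner may additionally draw an unlabeled sample of arbitrary size. This is handled by choosing the hard instance with a fixed, finite marginal support that does not depend on the draw, so that any number of unlabeled samples give the learner no strictly new information beyond what knowing $\supp(\D_\X)$ already provides; the labeled lower bound on $m_l$ therefore persists. Gluing the two pieces together, every semi-supervised learner must simultaneously satisfy $m_l=\Omega\paren{\VCU(\H)/\epsilon+\log(1/\delta)/\epsilon}$ and $m_l+m_u=\Omega\paren{\mathrm{RS}_\U(\H)/\epsilon+\log(1/\delta)/\epsilon}$, where the latter is vacuous only in the degenerate regime $m_u=\infty$. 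This is precisely the disjunctive lower bound claimed.
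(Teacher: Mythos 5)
Your proof is correct and takes essentially the same route the paper intends: the paper itself states that this corollary ``stems from Theorem~\ref{thm:known-marginal} and \citet[Theorem 10]{montasser2019vc},'' and your two components (the known-support lower bound together with a strip-labels reduction to the supervised lower bound) are exactly that decomposition, with the details the paper leaves implicit now made explicit.

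One small precision issue in the first component: you write that ``the known-support oracle setting is at least as informative as any semi-supervised setting, regardless of $m_u$.'' As stated this is backwards — with $m_u = \infty$ the learner effectively knows the full marginal $\D_\X$, which is strictly more information than knowing only $\supp(\D_\X)$. The reason the $\Omega(\VCU/\epsilon + \log(1/\delta)/\epsilon)$ lower bound still carries over is that the hard instance underlying \cref{thm:known-marginal} (which is just the standard realizable $\VC$ lower bound applied to $\H_{\U\text{-cons}}$, since $\VC(\H_{\U\text{-cons}})=\VCU(\H)$) fixes the \emph{entire marginal distribution}, not merely its support, and only the conditional labels vary. So unlabeled data gives the learner literally nothing new. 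You gesture at this when you say the hard instance has ``a fixed, finite marginal support that does not depend on the draw,'' but you should say ``a fixed marginal distribution'' — fixing the support alone would not suffice, since an adversary that varied the weights within a common support could in principle be partially identified from a large unlabeled sample.

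With that one wording correction, the argument is complete and matches the paper.
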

\paragraph{Proper vs. improper.}
In Section \ref{sec:knowing-support}, we have seen that when the support of the marginal distribution $\D_{\X}$ is known, the labeled sample complexity is $\Theta\paren{\frac{\VCU(\H)}{\epsilon}+\frac{\log\frac{1}{\delta}}{\epsilon}}$. This was obtained by a proper learner: keep the robustly self-consistent hypotheses, $\H_{\U\text{-cons}}\subseteq \H$, and then use ERM on this class. The case when $\D_{\X}$ is unknown is different. We know that there exists a perturbation function $\U$ and a hypothesis class $\H$ with finite $\VC$-dimension that cannot be robustly $\pac$ learned with any proper learning rule \citep[Lemma 3]{montasser2019vc}. The same proof holds in the semi-supervised case.
Note that both algorithms $\A$ and $\B$ used in \cref{alg:generic-algo} are improper. (The proof appears in \cref{app:realizable}.)
\begin{theorem}\label{thm:improper}
There exists $\H$ with $\VC(\H)=0$ such that for any proper
learning rule $\A :\paren{\X\times\Y}^* \cup \paren{\X}^* \rightarrow \H$, there exists a distribution $\D$ over $\X \times \Y$ that is robustly realizable by $\H$, i.e., $\risk_{\U}\paren{\H;\D}=0$. 
It holds that
$\risk_{\U}\paren{\A(S^l,S^{u}_{\X});D}>\frac{1}{8}$ with probability at least $\frac{1}{7}$ over $S^l \sim \D^{m_l}$ and $S^{u}_{\X} \sim \D^{m_u}$, where $m_l,m_u\in \N \cup \sett{0}$ is the size of the labeled and unlabeled samples respectively.
Moreover, when the marginal distribution $\D_{\X}$ is known, there exists a proper learning rule for any $\H$.
\end{theorem}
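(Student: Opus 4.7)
The plan is to handle the two assertions separately: the impossibility part by adapting the proper-learning lower bound of \citet[Lemma 3]{montasser2019vc} to the semi-supervised regime, and the possibility part by appealing to the proper algorithm already constructed in \cref{sec:knowing-support}.

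For the first assertion, I would take the hard instance of \citet{montasser2019vc}---a hypothesis class $\H$ and perturbation $\U$ together with a finite family of robustly realizable distributions $\sett{\D_i}_{i \in I}$---on which any proper learner fails with constant probability on some $\D_i$. The key structural property I would verify is \emph{marginal invariance}: the marginals $(\D_i)_{\X}$ can all be arranged to coincide, with the distributions differing only in how they label a common support. Granted this, the unlabeled sample $S^u_\X \sim \D_\X^{m_u}$ has the same law under every $\D_i$ and hence conveys no information about $i$. A Yao-style averaging over a uniform prior on $I$ then reduces any semi-supervised proper learner to a (possibly randomized) supervised proper learner that effectively sees only $S^l$, and the lower bound from the supervised case carries over verbatim to give robust error exceeding $1/8$ with probability at least $1/7$.

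For the second assertion, I would instantiate the warm-up procedure behind \cref{thm:known-marginal}. Given $\supp(\D_\X)$, form the proper subclass
\[
\H_{\U\text{-cons}} \;=\; \sett{h \in \H : \forall x \in \supp(\D_\X),\ \forall z,z' \in \U(x),\ h(z) = h(z')} \;\subseteq\; \H,
\]
and run $\ERM_{\H_{\U\text{-cons}}}$ under the $0$-$1$ loss on the labeled sample. Robust realizability of $\D$ by $\H$ places the optimal hypothesis into $\H_{\U\text{-cons}}$, so the restricted class is $0$-$1$ realizable under $\D$, and on this class the $0$-$1$ empirical risk coincides with the robust empirical risk. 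Any $\ERM$ output therefore lies in $\H$ and is a proper learner for $\H$; the sample-complexity bound follows directly from \cref{thm:known-marginal}.

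The main obstacle I expect will be the marginal-invariance check in part one: one must inspect (and, if needed, lightly tweak) the distributions used in \citet{montasser2019vc} to confirm that they can be taken with a common $\X$-marginal, so that the unlabeled sample is provably uninformative about the index $i$. Once this is in place, neither the combinatorial construction of the hard class nor the averaging argument needs to be redone, and the rest of the semi-supervised lower bound follows as a direct import from the supervised case.
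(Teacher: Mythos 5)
Your treatment of the second assertion (a proper learner exists when the marginal is known) is exactly right and matches the paper: form $\H_{\U\text{-cons}}$ using $\supp(\D_\X)$ and run $\ERM$ with the $0$-$1$ loss, noting that the robust and $0$-$1$ empirical risks coincide on this subclass and that robust realizability places the target inside it.

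The first assertion, however, has a genuine gap, and it is precisely at the ``marginal-invariance check'' you flag as the expected obstacle. That check fails, and not because of a fixable detail: for the Montasser-style class $\Tilde{\H}_{3m}$ used in the paper, each $h_b$ is robustly correct only on the $2m$ points $x_i$ with $b_i = 0$, so a distribution $\D$ is robustly realizable by $h_b$ \emph{only if} $\supp(\D_\X) \subseteq \{x_i : b_i = 0\}$. The marginal therefore determines the target; all robustly realizable distributions for distinct targets have \emph{different} supports, and a common $\X$-marginal across the family is impossible. More generally, if a family of robustly realizable distributions shared one marginal and still forced every proper $h$ to fail on some member, then two points in the support would be robustly labelable both ways, which gives $\VCU(\H) \geq 1$ and hence $\VC(\H) \geq 1$ — at odds with the small-$\VC$ premise of the theorem. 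So the reduction via ``unlabeled data is statistically uninformative about the index'' cannot be salvaged here by tweaking the distributions.

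The paper uses a different and cruder reduction: a semi-supervised learner seeing $m_l$ labeled and $m_u$ unlabeled points has no more information than a supervised learner seeing $m_l + m_u$ fully labeled points (drop the extra labels). So the supervised proper-learning lower bound of \citet[Lemma 3]{montasser2019vc}, applied at total sample size $m_l + m_u$, carries over verbatim. Inside that supervised proof the unlabeled points \emph{do} leak information about the support, but the no-free-lunch argument only needs that the learner's observed sample misses a constant fraction of the $2m$ support points, which is arranged by choosing the construction's parameter $m$ large relative to the sample size. Your Yao-style averaging over a uniform prior is compatible with that argument, but it must average over the joint randomness of index \emph{and} observed support, not treat the unlabeled sample as index-independent. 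If you replace ``the unlabeled sample conveys no information'' with ``the combined sample, even fully labeled, would still not suffice,'' the rest of your outline goes through.
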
 
    \section{Agnostic robust learning}\label{subsec:agnostic}
    In this section, we prove the guarantees of \cref{alg:generic-algo} in the more challenging agnostic robust setting. We then prove lower bounds on the sample complexity which exhibit that it is inherently different from the realizable case.

     We follow the same steps as in the proof of the realizable case, with the following important difference. In the first two steps of the algorithm, we learn a partial concept class with respect to the $0$-$1$ loss, and obtain a hypothesis with error of $\eta+\epsilon/3$ ($\eta$ is the optimal robust error of a hypothesis in $\H$ and not $0$). This leads eventually to error of $\,3\eta+\epsilon\,$ for learning with respect to the robust loss.
     
     We then present two negative results.
     In \cref{thm:dimu-labels-agnostic} we show that for obtaining error $\eta+\epsilon$ there is a lower bound of $\Omega(\mathrm{RS}_\U)$ labeled examples, this result coincides with the lower bound of supervised robust learning. 
     In \cref{thm:vcu-2eta}, we show that 
      for any $\gamma>0$ there exist a hypothesis class, such that having access only to $\O(\VCU)$ labeled examples, leads to an error $(\frac{3}{2}-\gamma)\eta+\epsilon$. 
      (All proofs for this section are in \cref{app:agnostic}.)
     
     We start with the upper bounds. First, we analyze the case of using a generic agnostic robust learner, then we deduce the sample complexity of a specific instantiation of such algorithm.
     
    \begin{theorem}\label{thm:agnostic-sample-compexity}
    For any hypothesis class $\H$ and adversary $\U$, Algorithm $\grass$ $(3,\epsilon,\delta)\text{-}\pac$ learns $\H$ with respect to the robust loss function, in the agnostic robust case, with samples of size
        \begin{align*}
            m_l= \O\paren{\frac{\VCU(\H)}{\epsilon^2}\log^2\frac{\VCU(\H)}{\epsilon^2}+\frac{\log\frac{1}{\delta}}{\epsilon^2}}
            , \; \; 
            m_u = \Lambda_{\AG}\paren{1,\frac{\epsilon}{3},\frac{\delta}{2},\H,\U,2\eta+\frac{\epsilon}{3}}
            ,
        \end{align*}
        where $\Lambda_{\AG}\paren{\alpha,\epsilon,\delta,\H,\U,\eta}$ is the sample complexity of adversarially-robust agnostic supervised learning, such that $\eta$ is error of the optimal hypothesis in $\H$, namely $\eta = \risk_{\U}\paren{\H;\D}$. 
    \end{theorem}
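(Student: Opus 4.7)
The plan is to follow the same four-stage template as in the realizable case, but to carefully track an additional $\eta$ term at each stage and to invoke agnostic (rather than realizable) sample complexity bounds for the two subroutines. The key preliminary observation is that the correspondence $\phi(h) = h^\star$ of Definition~\ref{def:partial-robust-class} satisfies the pointwise identity $\ell_{0\text{-}1}(h^\star;x,y) = \ell_\U(h;x,y)$: if $h^\star(x)=\star$ then by definition $h$ is non-constant on $\U(x)$, so the robust loss is $1$ and matches the partial $0$-$1$ loss; if $h^\star(x)=h(x)$ then the robust loss simply equals $\I[h(x)\neq y]$. Consequently, for any $h_{\mathrm{opt}} \in \argmin_{h\in\H} \risk_\U(h;\D)$ the associated partial concept $h^\star_{\mathrm{opt}} \in \H^\star_\U$ satisfies $\risk(h^\star_{\mathrm{opt}};\D) = \eta$.

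In the first subroutine, I would invoke the agnostic PAC learner $\A$ for the partial concept class $\H^\star_\U$, whose VC-dimension equals $\VCU(\H)$, with accuracy $\epsilon/3$ and confidence $\delta/2$. By the agnostic sample-complexity bound of \citet{alon2021theory} for partial concept classes (discussed in Appendix~\ref{app:algo-partial}), this requires $m_l = O\bigl(\VCU(\H)\epsilon^{-2}\log^2(\VCU(\H)/\epsilon^2) + \epsilon^{-2}\log(1/\delta)\bigr)$ labeled examples and produces $h_1$ (a total $\{0,1\}$-valued hypothesis) with $\risk(h_1;\D) \leq \eta+\epsilon/3$ with probability at least $1-\delta/2$.

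Next, I self-label $S^u_\X$ with $h_1$ to obtain $S^u\sim\tilde\D^{m_u}$, where $\tilde\D(x,h_1(x))=\D_\X(x)$. The crucial tool for transferring error between $\D$ and $\tilde\D$ is the elementary triangle inequality for binary labels,
\begin{equation*}
\ell_\U(h;x,y_1) \;\leq\; \ell_\U(h;x,y_2) + \I[y_1\neq y_2],
\end{equation*}
which, applied to $h_{\mathrm{opt}}$ with $y_2=y$ and $y_1=h_1(x)$ and taking expectation under $\D$, yields
\begin{equation*}
\risk_\U(\H;\tilde\D) \;\leq\; \risk_\U(h_{\mathrm{opt}};\tilde\D) \;\leq\; \risk_\U(h_{\mathrm{opt}};\D) + \risk(h_1;\D) \;\leq\; 2\eta + \epsilon/3.
\end{equation*}
I can therefore run any agnostic robust supervised PAC learner $\B$ on $S^u$ with accuracy $\epsilon/3$ and confidence $\delta/2$, provided $m_u \geq \Lambda_{\AG}(1,\epsilon/3,\delta/2,\H,\U,2\eta+\epsilon/3)$; this produces $h_2$ with $\risk_\U(h_2;\tilde\D) \leq 2\eta + 2\epsilon/3$ with probability at least $1-\delta/2$.

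Finally, applying the same triangle inequality in reverse (this time to $h_2$) gives
\begin{equation*}
\risk_\U(h_2;\D) \;\leq\; \risk_\U(h_2;\tilde\D) + \risk(h_1;\D) \;\leq\; (2\eta+2\epsilon/3) + (\eta+\epsilon/3) \;=\; 3\eta+\epsilon,
\end{equation*}
and a union bound over the two failure events yields total confidence $1-\delta$. The only real subtlety, and the main place where the $\eta$-accounting must be watched, is the passage through $\tilde\D$: each application of the triangle inequality costs one copy of $\risk(h_1;\D) \leq \eta+\epsilon/3$, accounting for two of the three copies of $\eta$, while the third copy comes from $\risk_\U(h_{\mathrm{opt}};\D)=\eta$ itself. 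Everything else is a direct agnostic variant of the realizable argument already given for Theorem~\ref{thm:realizable-sample-compexity}.
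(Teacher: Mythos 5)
Your proof is correct and follows essentially the same four-stage argument as the paper's: learn $h_1$ on the partial class $\H^{\star}_\U$ with $0$-$1$ error at most $\eta+\epsilon/3$, self-label to form $\Tilde{\D}$, bound $\risk_\U(\H;\Tilde{\D})\leq 2\eta+\epsilon/3$, run the agnostic robust supervised learner, and transfer $h_2$'s error back to $\D$. The only cosmetic difference is in the final transfer: the paper observes that the total-variation distance between $\D$ and $\Tilde{\D}$ is at most $\eta+\epsilon/3$, whereas you reapply the pointwise inequality $\ell_\U(h;x,y_1)\leq\ell_\U(h;x,y_2)+\I\sqparen{y_1\neq y_2}$ to $h_2$; these are equivalent and yield the identical constant $3\eta+\epsilon$.
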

    By using the agnostic supervised robust learner suggested by \citet{montasser2019vc}, we have the following upper bound on the unlabeled sample size, $m_u
    =
    \Tilde{\O}\paren{\frac{\VC(\H)\VC^*(\H)}{\epsilon^2}+\frac{\log\frac{1}{\delta}}{\epsilon^2}}$.

    We now present two negative results.
    
    \begin{theorem}\label{thm:dimu-labels-agnostic}
    For any $\epsilon,\delta\in (0,1)$, the sample complexity of agnostic robust $(1,\epsilon,\delta)$-$\pac$ learning 
    for a class $\H$, with respect to perturbation function $\U$ is (even if $\D_\X$ is known),
    \begin{align*}
        m_l = \Omega\paren{\frac{\mathrm{RS}_\U(\H)}{\epsilon^2}+\frac{1}{\epsilon^2}\log\frac{1}{\delta}}
        \;,\;
        m_u = \infty.
    \end{align*}
    \end{theorem}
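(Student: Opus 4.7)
The plan is to prove \cref{thm:dimu-labels-agnostic} by adapting the classical $\Omega(d/\epsilon^2+(1/\epsilon^2)\log(1/\delta))$ agnostic PAC lower bound for classes of VC dimension $d$ to the adversarially robust setting with $d := \mathrm{RS}_\U(\H)$, with the additional constraint that the hard distributions all share a common marginal (so that access to $\D_\X$ provides no advantage). The starting point is \cref{def:dimu}: extract a robustly shattered sequence $x_1,\ldots,x_d$, witnesses $z_i^+,z_i^-$ satisfying $x_i\in \U(z_i^+)\cap \U(z_i^-)$, and, for each $\vec y\in\{+,-\}^d$, a hypothesis $f_{\vec y}\in\H$ that is constantly $y_i$ on $\U(z_i^{y_i})$. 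Because $f_{\vec y}(x_i)=y_i$, the same sequence is also classically $\VC$-shattered, i.e.\ $\VC(\H)\geq d$.

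Next, I would construct a family $\{\D_{\vec y}\}_{\vec y\in\{+,-\}^d}$ of distributions on $\X\times\{+,-\}$ sharing a single common marginal $\D_\X$ supported on $\{z_1^+,z_1^-,\ldots,z_d^+,z_d^-\}$, so that knowledge of $\D_\X$ reveals nothing about $\vec y$. The conditional label is Bernoulli-biased toward $y_i$ with magnitude $\gamma=\Theta(\epsilon)$ at $z_i^{y_i}$, and uniform (noninformative) at $z_i^{-y_i}$. Pushing forward along the map $z_i^\sigma\mapsto x_i$ produces a virtual distribution $\D^{\mathrm{virt}}_{\vec y}$ on the $\VC$-shattered set $\{x_1,\ldots,x_d\}$, whose Bayes risk is $\tfrac{1}{2}-\Theta(\gamma)$. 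The key reduction is
\[
\risk_\U(h;\D_{\vec y}) \;\geq\; \risk(h;\D^{\mathrm{virt}}_{\vec y}),
\]
which is immediate from $x_i\in \U(z_i^\sigma)$ for both $\sigma\in\{+,-\}$, since $\sup_{\zeta\in \U(z_i^\sigma)}\mathbb{I}[h(\zeta)\neq b] \geq \mathbb{I}[h(x_i)\neq b]$. Hence any $(1,\epsilon,\delta)$-PAC robust learner outputs $\hat h$ whose $0$-$1$ risk on $\D^{\mathrm{virt}}_{\vec y}$ is at most $\risk_\U(\H;\D_{\vec y})+\epsilon$, and the standard Fano-plus-Le Cam lower bound for agnostic PAC learning on a $\VC$-shattered set would yield $m_l = \Omega(d/\epsilon^2 + (1/\epsilon^2)\log(1/\delta))$ — provided the slack $\risk_\U(\H;\D_{\vec y})-\risk(\H;\D^{\mathrm{virt}}_{\vec y})$ can be kept $O(\epsilon)$.

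The hard part is precisely controlling that slack: \cref{def:dimu} pins down $f_{\vec y}$ only on $\U(z_i^{y_i})$ and says nothing about $\U(z_i^{-y_i})$, so in the worst case $f_{\vec y}$'s robust loss at $(z_i^{-y_i},b)$ can be $1$, which would inflate $\risk_\U(\H;\D_{\vec y})$ by a constant over the Bayes risk on $\D^{\mathrm{virt}}_{\vec y}$. I would address this by weighting the common marginal so that only an $O(\epsilon)$ fraction of the mass falls on the uncontrolled witnesses $z_i^{-y_i}$, while still keeping $\D_\X$ independent of $\vec y$ — achieved either by a randomized assignment of ``controlled'' versus ``uncontrolled'' witnesses that is independent of $\vec y$, or by a symmetric mass-compensation scheme. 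With the slack reduced to $O(\epsilon)$, Fano's inequality (for the $d/\epsilon^2$ term) and a Le Cam-style two-point test (for the $(1/\epsilon^2)\log(1/\delta)$ term) go through to give the claimed lower bound, strengthening the supervised lower bound of \citet{montasser2019vc} to hold even when $\D_\X$ is fully known to the learner.
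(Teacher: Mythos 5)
Your overall strategy — robustly shatter a set of size $d=\mathrm{RS}_\U(\H)$, embed a family of hard distributions on the witnesses $\{z_i^+,z_i^-\}$, and reduce to the classical agnostic lower bound via $\risk_\U(h;\D_{\vec\sigma})\geq\risk(h;\D^{\mathrm{virt}}_{\vec\sigma})$ (using $x_i\in\U(z_i^+)\cap\U(z_i^-)$) — is the paper's approach, and you correctly flag the central obstacle that Definition~\ref{def:dimu} says nothing about $f_{\vec\sigma}$ on $\U(z_i^{-\sigma_i})$. But your construction places the $\vec\sigma$-dependent signal in the labels while keeping the marginal uniform, so each uncontrolled witness $z_i^{-\sigma_i}$ carries mass $\tfrac{1}{2k}$ with an uninformative label; since we only know $\ell_\U(f_{\vec\sigma};z_i^{-\sigma_i},-\sigma_i)=1$ but have no control over $\ell_\U(f_{\vec\sigma};z_i^{-\sigma_i},\sigma_i)$, the worst case puts $\risk_\U(\H;\D_{\vec\sigma})$ an additive $\Omega(1)$ above the Bayes risk of $\D^{\mathrm{virt}}_{\vec\sigma}$, not the needed $O(\epsilon)$. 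Your proposed fix — downweight the uncontrolled witnesses to $O(\epsilon)$ total mass — directly conflicts with keeping $\D_\X$ fixed, since which of $\{z_i^+,z_i^-\}$ is uncontrolled is precisely what $\vec\sigma$ determines; the ``randomized assignment'' or ``symmetric mass-compensation'' you invoke is exactly the missing construction, and you do not exhibit one.

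The paper eliminates the slack rather than bounding it, by making the labels deterministic ($1$ at $z_j^+$, $0$ at $z_j^-$) and placing the $\vec\sigma$-dependence in the marginal masses: $\D_{\vec\sigma}(z_j^+,1)=\tfrac{1\pm\alpha}{2k}$ and $\D_{\vec\sigma}(z_j^-,0)=\tfrac{1\mp\alpha}{2k}$, with the heavier mass always on the controlled witness $z_j^{\sigma_j}$. On the light witness $z_j^{-\sigma_j}$ the label is deterministically $-\sigma_j$, and since $f_{\vec\sigma}(x_j)=\sigma_j$ with $x_j\in\U(z_j^{-\sigma_j})$, we get $\ell_\U(f_{\vec\sigma};z_j^{-\sigma_j},-\sigma_j)=1$ \emph{always}, matching the pointwise Bayes value regardless of $f_{\vec\sigma}$'s unspecified behaviour elsewhere in $\U(z_j^{-\sigma_j})$. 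Hence $\risk_\U(\H;\D_{\vec\sigma})=\tfrac{1-\alpha}{2}$ exactly, with zero slack, and identifying $\sigma_j$ from the sample reduces to distinguishing the per-coordinate masses $\tfrac{1\pm\alpha}{2k}$, giving $m_l=\Omega(\mathrm{RS}_\U/\alpha^2+(1/\alpha^2)\log(1/\delta))$ and the claimed bound with $\alpha=\Theta(\epsilon)$. You should adopt this deterministic-label construction; note that the paper's marginal is $\vec\sigma$-dependent, so its sketch handles the ``even if $\D_\X$ is known'' parenthetical only implicitly — your instinct to fix $\D_\X$ is a reasonable refinement to pursue afterwards, but it is not a substitute for removing the slack.
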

    
    \begin{theorem}\label{thm:vcu-2eta}
    For any $\gamma>0$, there exists a hypothesis class $\H$ and adversary $\U$, 
    such that the sample complexity for $(\frac{3}{2}-\gamma,\epsilon,\delta)$-$\pac$ learn $\H$ is
    \begin{align*}
        m_l = \Omega\paren{\frac{\VCU(\H)}{\epsilon^2}+\frac{1}{\epsilon^2}\log\frac{1}{\delta}}
        \;,\;
        m_u = \infty.
    \end{align*}
    \end{theorem}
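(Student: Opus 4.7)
My plan is to exhibit, for any fixed $\gamma > 0$, a hypothesis class $\H$, perturbation $\U$, and family of distributions witnessing the lower bound. I would first take $\U(x) = \{x\}$ to be the trivial adversary, so that $\VCU(\H) = \VC(\H)$ and any unlabeled sample can only reveal the marginal $\D_\X$. By taking $\D_\X$ to be a fixed, publicly known distribution, the $m_u = \infty$ clause is rendered vacuous, and the theorem reduces to proving a standard (non-robust) agnostic $\pac$ lower bound of $\Omega(\VC(\H)/\epsilon^2 + \log(1/\delta)/\epsilon^2)$ labeled samples against the relaxed $(\frac{3}{2}-\gamma)$-approximation target.

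For the lower bound itself I would use a modified Assouad / Devroye--Lugosi template. Take $\X = \{x_0, x_1, \ldots, x_d\}$ and $\H$ all Boolean functions on $\X$, giving $\VC(\H) = d+1$. Define a family of distributions $\{\D_\sigma\}_{\sigma \in \{\pm 1\}^d}$: mass $1-\lambda$ on $x_0$ with deterministic label $0$, and mass $\lambda/d$ on each $x_i$ ($i\ge 1$) with label $\mathrm{Bernoulli}(\frac{1}{2} + \beta \sigma_i)$, for parameters $\lambda, \beta$ to be tuned. Then $\risk_{\U}(\H;\D_\sigma) = \lambda(\frac{1}{2}-\beta)=:\eta$, and a standard Assouad/Fano inequality shows that any $m$-sample algorithm has expected excess robust error, averaged over a uniform prior on $\sigma$, of order $\Omega\bigl(\lambda\beta \cdot \min(1, \sqrt{d / (\lambda m \beta^2)})\bigr)$, and hence the same bound holds for at least one $\sigma$.

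The main obstacle is the joint calibration of $\lambda$ and $\beta$. The standard choice $\lambda = 1$ forces $\eta \approx \frac{1}{2}$, so that $(\frac{3}{2}-\gamma)\eta > \frac{1}{2}$ and a random or constant predictor $(\frac{3}{2}-\gamma)$-approximates the optimum with zero samples, killing the lower bound. I would instead take $\lambda$ as a small constant depending only on $\gamma$, so that $\eta$ is well below $\frac{1}{3}$ and the $(\frac{3}{2}-\gamma)$-approximation is a genuine constraint; then pick $\beta$ so that the Assouad term $\sqrt{\lambda d/m}$ exceeds the allowed excess $(\frac{1}{2}-\gamma)\eta + \epsilon$ precisely when $m \leq c\, d/\epsilon^2$. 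The additive $\log(1/\delta)/\epsilon^2$ term is obtained by a stand-alone Le Cam two-point argument on a single Bernoulli coin (the $d=1$ reduction) and glued to the main construction via a disjoint-support juxtaposition. Verifying that all pieces fit under a common $\lambda, \beta$ is the technical crux.
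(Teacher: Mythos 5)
Your proposal chooses the trivial adversary $\U(x)=\{x\}$, which forces $\VCU(\H)=\VC(\H)$ and reduces the problem to a standard (non-robust) agnostic lower bound for $(\tfrac{3}{2}-\gamma)$-approximate learning. This cannot work, and the reason is exactly the calibration difficulty you flag at the end but underestimate. With the $0$-$1$ loss and no adversary, the usual Bernstein-type / localized deviation bound gives $\Risk(\hat h)\le \eta + O\!\left(\sqrt{\eta\cdot\tfrac{d}{m}}+\tfrac{d}{m}\right)$ for ERM, and a standard AM--GM split shows that for any fixed $\alpha>0$ one achieves $(1+\alpha)\eta+\epsilon$ with only $m=O_\alpha(d/\epsilon)$ samples, \emph{not} $\Theta(d/\epsilon^2)$. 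So for $\U(x)=\{x\}$ the $\Omega(\VC/\epsilon^2)$ lower bound you are trying to prove for $(\tfrac{3}{2}-\gamma)$-approximation is simply false; no joint choice of $\lambda,\beta$ in the Assouad construction can close this gap, because the Assouad excess error scales like $\sqrt{\lambda d/m}$ while the target slack is $(\tfrac12-\gamma)\lambda(\tfrac12-\beta)+\epsilon$, and the additive $\epsilon$ on the right always dominates once you also impose $\lambda\beta^2\lesssim\epsilon^2$ (which is needed to push the lower bound on $m$ up to $d/\epsilon^2$). The ``calibration'' is not a technicality to verify but an obstruction.

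The paper's proof is built on an inherently robust phenomenon that disappears entirely under the trivial adversary. It takes the construction of Proposition~\ref{prop:gap-vcu-dimu}, which has $\VCU(\H)=0$ while $\mathrm{RS}_\U(\H)$ is arbitrarily large: pairs $(x_i,z_i)$ with $|\U(x_i)\cap\U(z_i)|=1$, and each $h_b\in\H$ is robustly self-consistent on exactly one of $\U(x_i),\U(z_i)$. On the distribution supported on these pairs, the optimal robust error is exactly $1/2$ per pair (no hypothesis can be robustly correct on both), while a learner that has not seen a label for a pair can only guess which side to commit to, incurring robust error $3/4$ on that pair in expectation. The ratio $\tfrac{3/4}{1/2}=\tfrac32$ is precisely where the approximation factor in the theorem comes from, and since $\mathrm{RS}_\U$ can be taken arbitrarily large relative to the labeled budget, the gap approaches $\tfrac32$. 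Crucially, knowing $\D_\X$ (or having $m_u=\infty$) does not resolve the per-pair ambiguity, because both labelings are realizable by $\H$; this is a feature of the robust loss and the intersecting perturbation sets, and has no analogue when $\U(x)=\{x\}$. Your plan discards exactly the structure the argument lives on.
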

\paragraph{Open question.}
What is the optimal error rate in the agnostic setting when using only $\O\lr{\VCU}$ labeled examples?
\section{Learning with the 0-1 loss assuming robust realizability}\label{sec:improved-01-loss}
    In this section we learn with respect to the 0-1 loss, under robust realizability assumption.
    A Distribution $\D$ over $\X\times\Y$ is robustly realizable by $\H$ given a perturbation function $\U$, if there is $h\in\H$ such that not only $h$ classifies all points in $\D$ correctly, it also does so with respect to the robust loss function, that is,
    $\risk_{\U}(\H;\D)=0$. Note that our guarantees, only in this section, are with respect to the non-robust risk.
    The formal definition is in \cref{app:improved-01-loss}.
    A simple example for this model is the following. Let $\mathcal{H}$ be linear separators on $\mathcal{X}$ the unit ball in $\mathbb{R}^d$, and $\mathcal{U}$ as $\ell_2$ balls of radius $\gamma$, the robustly realizable distributions are separable with margin $\gamma$, where $\mathrm{VC}_{\mathcal{U}
     }(\mathcal{H}) = \frac{1}{\gamma^2}$ but $\mathrm{VC}(\mathcal{H}) = d+1$ can be arbitrarily larger.
    Moreover, we have the following example.
    (All proofs are in appendix \cref{app:improved-01-loss}.)
    \begin{proposition}\label{prop:vc2m-vcu1}
    For any $m\in \mathbb{N}$,
    there exist a hypothesis class $\H_m$ and distribution $\D$, such that $\D$ is robustly realizable by $\H_m$, $\VCU(\H_m)=1$, and $\VC(\H_m)= 2m$.
    \end{proposition}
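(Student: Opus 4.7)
The plan is to construct $\X_m$, a perturbation $\U_m$, the class $\H_m$, and $\D$ directly, and then verify the three required properties by inspection. Take $\X_m = \{1,\ldots,2m\}$, set $\U_m(x) = \X_m$ for every $x \in \X_m$ (so the only restriction $x \in \U_m(x)$ is trivially met), let $\H_m = \{0,1\}^{\X_m}$ be the class of all Boolean functions on $\X_m$, and let $\D$ be the point mass at the single pair $(1,0) \in \X_m \times \{0,1\}$.

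Three properties must then be checked. First, $\VC(\H_m) = 2m$: every function $y \colon \X_m \to \{0,1\}$ is already a hypothesis in $\H_m$, so the full set $\X_m$ of size $2m$ is shattered, while trivially $\VC(\H_m) \le |\X_m| = 2m$. Second, $\VCU(\H_m) = 1$: a sequence $\{x_{i_1},\ldots,x_{i_k}\}$ is $\U_m$-shattered iff every labeling in $\{0,1\}^k$ is realized by some $h \in \H_m$ that is constant on each $\U_m(x_{i_j}) = \X_m$; but the only $h \in \H_m$ constant on $\X_m$ are the two constant functions $\bar 0$ and $\bar 1$, which realize only the labelings $(0,\ldots,0)$ and $(1,\ldots,1)$. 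Hence no sequence of length $\ge 2$ is $\U_m$-shattered, while any singleton is. Third, $\D$ is robustly realized by $\bar 0 \in \H_m$, since $\sup_{z \in \U_m(1)} \I[\bar 0(z) \neq 0] = 0$, which gives $\risk_{\U_m}(\bar 0;\D) = 0$.

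There is essentially no obstacle in the argument: the statement is purely existential and the construction takes only a few lines. The one conceptual point worth flagging in the write-up is that the example leans on the paper's convention that $\U$ is unrestricted beyond $x \in \U(x)$, using the maximally-adversarial perturbation $\U_m(x) \equiv \X_m$. Under a geometric perturbation (say $\ell_p$-balls in a metric space) one would need a subtler construction, but nothing of the sort is required here.
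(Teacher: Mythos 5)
Your construction is correct and is essentially the same as the paper's: take all Boolean functions on a $2m$-point domain so $\VC = 2m$, make the perturbation sets overlap so that no pair can be $\U$-shattered (giving $\VCU = 1$), and pick a distribution that a constant function robustly realizes. The paper uses the slightly more general condition $\bigcap_i \U(x_i) \neq \emptyset$ and a uniform distribution over $(x_i,1)$, while you take the cleanest special case $\U(x) \equiv \X_m$ and a point mass; these differences are purely cosmetic.
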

    Standard $\VC$ theory does not ensure learning in this case. In this section we explain how we can learn in such a scenario with a small sample complexity (scales linearly in $\VCU$). Moreover, we show that it cannot be achieved via proper learners.
    
    \begin{theorem}\label{thm:improved-01}
    The sample complexity for learning a hypothesis class $\H$ with respect to the 0-1 loss, for any distribution $\D$ that is robustly realizable by $\H$, namely $\risk_{\U}\paren{\H;D}=0$,
    \begin{align*}
        \O\paren{\frac{\VCU(\H)}{\epsilon}\log^2\frac{\VCU(\H)}{\epsilon}+\frac{\log\frac{1}{\delta}}{\epsilon}}
        ,
        \Omega \paren{
        \frac{\VCU(\H)}{\epsilon}+\frac{\log\frac{1}{\delta}}{\epsilon}}.
    \end{align*}
    \end{theorem}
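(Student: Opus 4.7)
The plan is to leverage exactly the reduction already exposed in steps 1--2 of Algorithm~$\grass$: pass from the total class $\H$ to the partial concept class $\H^\star_\U$ of Definition~\ref{def:partial-robust-class}, and learn the latter with respect to the 0--1 loss using a partial-concept PAC learner in the spirit of \citet{alon2021theory}. The key observation that makes this work under the robust realizability assumption is that if $h_{\mathrm{opt}} \in \H$ satisfies $\risk_\U(h_{\mathrm{opt}};\D)=0$, then with $\D$-probability $1$ every $z\in\U(x)$ satisfies $h_{\mathrm{opt}}(z)=y$. In particular $h_{\mathrm{opt}}$ is robustly self-consistent on $\D_\X$-a.e.\ point, so $h^\star_{\mathrm{opt}}(x)=h_{\mathrm{opt}}(x)=y$ almost surely; the partial concept $h^\star_{\mathrm{opt}}$ therefore has zero 0--1 error (with $\star$ counted as a mistake) under $\D$. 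Thus $\D$ is realizable by the partial concept class $\H^\star_\U$.

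For the upper bound, I would invoke the PAC learner for partial concepts on $\H^\star_\U$ with parameters $(\epsilon,\delta)$ and the 0--1 loss. Since $\VC(\H^\star_\U)=\VCU(\H)$ (this is precisely why the definition was set up this way) and realizable PAC learnability of partial concepts achieves sample complexity $\O\!\paren{\frac{d}{\epsilon}\log^2\frac{d}{\epsilon}+\frac{1}{\epsilon}\log\frac{1}{\delta}}$ with $d=\VC$, we obtain the advertised upper bound. The output can be interpreted as a total predictor: any label produced for a $\star$-point still counts as at most a mistake, so the 0--1 risk with respect to $\D$ is at most the partial-concept 0--1 error, which is $\le \epsilon$ with probability $\ge 1-\delta$.

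For the lower bound, I would instantiate $\U$ as the identity perturbation $\U(x)=\{x\}$, in which case $\VCU(\H)=\VC(\H)$ and robust realizability coincides with classical realizability. The standard realizable PAC lower bound then gives $\Omega\!\paren{\frac{\VC(\H)}{\epsilon}+\frac{1}{\epsilon}\log\frac{1}{\delta}} = \Omega\!\paren{\frac{\VCU(\H)}{\epsilon}+\frac{1}{\epsilon}\log\frac{1}{\delta}}$, matching the upper bound up to the polylogarithmic factor.

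The main subtlety -- and really the only non-routine point -- is verifying that the learner for $\H^\star_\U$ actually yields a 0--1 guarantee for $\H$ under $\D$, rather than merely a guarantee against some partial-concept proxy. This is handled by the observation above that $h^\star_{\mathrm{opt}}$ attains zero 0--1 error (because robust realizability forbids $\star$ on the support), so the realizable-case bound for partial concepts applies verbatim, and any extension of the learner's output to a total function preserves the error since $\star$-predictions are counted as mistakes in the partial-concept loss. A secondary minor point is removing the extraneous $\log(1/\epsilon)$ factor that one would get from vanilla uniform convergence; this is exactly what the \citet{alon2021theory} learner (or the alternative learners cited in the footnote following \cref{thm:known-marginal}) delivers, so no new argument is needed.
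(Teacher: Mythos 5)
Your upper bound argument is correct and is essentially the same as the paper's: convert $\H$ to the partial concept class $\H^\star_\U$, note that robust realizability of $\D$ by $\H$ is precisely realizability of $\D$ by $\H^\star_\U$ (the optimal hypothesis is robustly self-consistent $\D_\X$-a.e., hence never produces $\star$), recall that $\VC(\H^\star_\U)=\VCU(\H)$, and invoke the \citet{alon2021theory} realizable learner for partial concepts.

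Your lower bound argument, however, has a genuine gap. You propose to ``instantiate $\U$ as the identity perturbation $\U(x)=\{x\}$,'' but $\U$ is part of the input to the problem: the theorem asserts a lower bound on $\Upsilon(\epsilon,\delta,\H,\U)$ for \emph{the given} $\H$ and $\U$, not merely the existence of some convenient $\U$ for which the bound is tight. Choosing $\U$ to be the identity proves only the weaker statement that the $\O(\VCU/\epsilon)$ upper bound cannot be improved uniformly; it does not show that every $(\H,\U)$ pair requires $\Omega(\VCU(\H)/\epsilon)$ samples. The correct argument works with the given $\U$: let $\{x_1,\dots,x_k\}$ with $k=\VCU(\H)$ be a $\U$-shattered set. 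For every labeling $y\in\{0,1\}^k$ there exists $h\in\H$ with $h(z)=y_i$ for all $z\in\U(x_i)$, so every distribution supported on $\{x_1,\dots,x_k\}$ labeled by such an $h$ is robustly realizable, and the standard realizable no-free-lunch construction on these $k$ points yields the claimed $\Omega\bigl(\frac{\VCU(\H)}{\epsilon}+\frac{1}{\epsilon}\log\frac{1}{\delta}\bigr)$ bound. Equivalently, observe that the learning problem in the statement (output any total predictor with small 0--1 risk, over robustly realizable distributions) is \emph{identical} to the realizable PAC problem for the partial concept class $\H^\star_\U$, so both halves of \cref{thm:partial-sample-complexity-realizable} transfer verbatim with $d=\VCU(\H)$ -- this is how the paper dispatches the lower bound.
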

    This Theorem was an intermediate step in the proof of \cref{thm:realizable-sample-compexity}, and the sample complexity is the same as \cref{thm:partial-sample-complexity-realizable},
    $\O\paren{\Lambda_{\RE}(\epsilon,\delta,\H)}.$
    We show that there exists a robust ERM that fails in this setting (Proposition \ref{prop:erm-fails} in \cref{app:improved-01-loss}).
    Then, we claim that every proper learner fails.
    \begin{theorem}\label{thm:proper-fails}
    There exists $\H$ with $\VCU(\H)=1$, such that for any proper
    learning rule $\A :\paren{\X\times\Y}^* \rightarrow \H$, there exists a distribution $\D$ over $\X \times \Y$ that is robustly realizable by $\H$, i.e., $\risk_{\U}\paren{\H;\D}=0$,
    and it holds that
    $\risk\paren{\A(S);D}>\frac{1}{8}$ with probability at least $\frac{1}{7}$ over $S \sim \D^{m}$.
    \end{theorem}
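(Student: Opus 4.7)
The strategy is a no-free-lunch adversarial argument, very much in the spirit of the proof of \cref{thm:improper} and Lemma~3 of \citet{montasser2019vc}; the key modification is that the error is measured under the non-robust $0$-$1$ loss, while the realizability assumption remains with respect to the robust loss. I would start from the same type of construction as in \cref{prop:vc2m-vcu1}, which already exhibits a class with $\VCU=1$ and arbitrarily large $\VC$ robustly realizable by some distribution. The idea is to leverage the gap between $\VCU$ and $\VC$: although $\VCU(\H)=1$ is very restrictive on shattering patterns, the richness of $\H$ means that $\H$ still contains many hypotheses that are robust-self-consistent on overlapping parts of $\X$ yet disagree substantially (even with respect to the $0$-$1$ loss) on the remainder.

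Concretely, I would fix an instance space of "twin" form $\X = \{x_i^0, x_i^1 : i \in \N\}$ with perturbation $\U(x_i^0) = \U(x_i^1) = \{x_i^0, x_i^1\}$, and take $\H = \{h_i : i \in \N\}$ with $h_i(x_i^b)=1$ and $h_i(x_j^b)=0$ for $j\neq i$. A direct check shows that any single point $x_i^b$ is $\U$-shattered (robust $1$ via $h_i$, robust $0$ via any $h_j$, $j\neq i$), while no two points are (the twin structure rules out $(0,1)$-patterns within an index, and the absence of a hypothesis labeling two indices as $1$ rules out $(1,1)$-patterns across indices). Hence $\VCU(\H)=1$.

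Given $\H$, I would build a family of distributions $\{\D_i\}_{i\in[N]}$ with $N=N(m)$ chosen sufficiently large compared to the sample size $m$, where $\D_i$ is robustly realized by $h_i$, supported on the atom $(x_i^0,1)$ (and its variants; see below) together with a spread of points $(x_j^0,0)$ for $j\neq i$. Masses are tuned so that (i)~the "informative" mass on points where $h_i$ differs from other $h_k$'s is a constant $\alpha>1/8$, making the non-robust error of any wrong output at least $\alpha$; and (ii)~the posterior over $i$ given any sample of size $m$ remains spread over $\Omega(N)$ candidate indices, so that any proper learner's output $\A(S)=h_k$ is nearly independent of the true target. To reconcile (i) and (ii) for large $m$—the main technical hurdle—I would split the "label-$1$" mass across a block of $T\gg m$ per-index atoms (refining the construction so that $\X$ contains twin blocks $\{x_{i,t}^0,x_{i,t}^1\}_{t\in[T]}$ and adjusting $\H$ accordingly while preserving $\VCU=1$), so that the total informative mass stays above $1/8$ but each individual atom appears in the sample with negligible probability.

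With the family in place, the proof concludes via averaging: pick $i\sim\uniform[N]$ and $S\sim\D_i^m$. Conditioned on the high-probability event that $S$ does not identify $i$, $\A(S)=h_k$ is essentially independent of $i$, so $\Pr[k=i]=O(1/N)$, giving $\E_{i,S}[\risk(\A(S);\D_i)] \geq \alpha(1-o(1))$ for $N$ large. Choosing $\alpha>15/56$ and applying the Markov-style inequality
\begin{align*}
\E_S[\risk(\A(S);\D_i)] \leq \Pr_S[\risk(\A(S);\D_i)>1/8]\cdot 1 + (1/8)\cdot 1,
\end{align*}
yields some index $i$ with $\Pr_S[\risk(\A(S);\D_i)>1/8]\geq 1/7$, as required. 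The main obstacle, as indicated above, is executing the tuning of $\D_i$ so that detection of $i$ requires observing a rare combination of atoms while still forcing non-robust error above $1/8$—a balance that fails in the most naïve construction (a single informative atom) and requires spreading the informative mass across many per-index variants without breaking $\VCU(\H)=1$.
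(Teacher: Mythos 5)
Your high-level strategy—a no-free-lunch averaging argument exploiting the gap between $\VCU$ and $\VC$—is the paper's, but the concrete construction has a fatal flaw that the "spreading" idea cannot repair. The class $\H=\{h_i\}$ of singleton indicators over disjoint twin blocks has $\VC(\H)=1$: no pair of points can realize the $(1,1)$ pattern (each $h_k$ is positive on exactly one twin pair), nor the $(1,0)$ pattern inside a pair (each $h_k$ is constant on $\{x_i^0,x_i^1\}$). Since robust realizability implies $0$-$1$ realizability, proper $\ERM$ over your $\H$ achieves $0$-$1$ error $\tilde{\O}(1/m)$ on \emph{every} distribution you can construct, which directly contradicts the conclusion you are trying to prove. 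The theorem needs a class with $\VCU(\H)=1$ but $\VC(\H)$ large (growing unboundedly), and your $\H$ does not have this separation.

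The "spreading" refinement ($T\gg m$ blocks per index) does not fix this, and you can see why without the $\VC$ objection: what reveals the target index $i$ is observing \emph{any} label-$1$ atom, and that happens with probability $1-(1-\alpha)^m$, where $\alpha$ is the \emph{total} label-$1$ mass. Splitting $\alpha$ over more atoms leaves that total unchanged; if $\alpha>1/8$ (as your error argument requires) the learner identifies $i$ for any nontrivial $m$. What has to break is the property that a single positive example pins down the target. The paper's construction achieves this by taking the class from Proposition~\ref{prop:vc2m-vcu1} on $3m$ points whose perturbation sets share a common anchor (so $\VCU=1$ while $\VC=3m$), restricting to the functions robustly self-consistent on exactly $2m$ of the $3m$ perturbation sets, and using the $\binom{3m}{2m}$ uniform distributions over $2m$-subsets labeled $1$. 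A positive example then reveals only one index of the support, not the whole support; a proper learner must commit to a self-consistency pattern of size $2m$ after seeing at most $m$ indices and is forced to guess the rest, giving constant expected $0$-$1$ error by the standard no-free-lunch argument. The repair your proposal needs is to replace singleton indicators $\{h_i\}$ with subset indicators $\{h_I\}$ anchored at a common point shared by all perturbation sets—which is essentially the paper's construction.
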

\section*{Acknowledgments}
We are grateful to Omar Montasser for his helpful input, 
particularly inspiring steps 3 and 4 of 
the $\grass$ learning algorithm. We would like to thank Vinod Raman for his enlightening comments regarding the correctness of our algorithm. Finally, we thank the anonymous reviewers for their thoughtful comments, which helped us improve the presentation of our paper.

This project has received funding from the European Research Council (ERC) under the European Union’s Horizon 2020 research and innovation program (grant agreement No. 882396), by the Israel Science Foundation (grants 993/17, 1602/19), Tel Aviv University Center for AI and Data Science (TAD), and the Yandex Initiative for Machine Learning at Tel Aviv University.
I.A. is supported by the Vatat Scholarship from the Israeli Council for Higher Education and by Kreitman School of Advanced Graduate Studies.
\bibliography{paperbib}
\newpage
\appendix

\section{Additional preliminaries for \cref{sec:prelim}}\label{app:prelim}

\paragraph{Complexity measures.}
    The  capacity measures, $\VCU$, $\mathrm{RS}_\U$ and $\VC$, play an important role in our results. See Definitions \ref{def:vcu} and \ref{def:dimu} for the $\VCU$ and $\mathrm{RS}_\U$ dimensions. It holds that $\VCU(\H)\leq \mathrm{RS}_\U(\H) \leq\VC(\H)$, in Proposition \ref{prop:gap-vcu-dimu} we demonstrate an arbitrary gap between $\VCU$ and $\mathrm{RS}_\U$, the key parameters controlling the sample complexity of robust learnability.
    
    Denote the projection of a hypothesis class $\H$ on set $S=\sett{x_1,\ldots,x_k}$ by $\H|_{S}=\{(h(x_1),\ldots,$ $h(x_k)):h \in \H\}.$
    We say that a set $S \subseteq \X$ is shattered by $\H$ if $\sett{0,1}^S = \H|_{S}$, the $\VC$-dimension \citep*{vapnik2015uniform} of $\H$ is defined as the maximal size of a shattered set $S$.
    The dual hypothesis class $\H^*\subseteq\sett{0,1}^\H$ is defined as the set of all functions $f_x:\H\rightarrow\sett{0,1}$ where $f_x(h)=h(x).$ We denote the $\VC$-dimension of the dual class by $\VC^*(\H)$. It is known that $\VC^*(\H)<2^{\VC(\H)+1}$ \citep{assouad1983densite}.
     
    \begin{definition}[Sample compression scheme]
    A pair of functions
    $(\kappa,\rho)$ is a sample compression scheme of size $\ell$ for class $\H$ if for any $n\in\N$, $h\in \H$ and sample $S = \{(x_i,h(x_i))\}^n_{i=1}$, it holds for the compression function that $\kappa\left(S\right)\subseteq S$ and $|\kappa\left(S\right)|\leq \ell$, and the reconstruction function  $\rho\left(\kappa\left(S\right)\right)=\hat{h}$ satisfies $\hat{h}(x_i)= h(x_i)$  for any $i\in[n]$.
    \end{definition}
    \paragraph{Partial concept classes - \cite{alon2021theory}.}
    Let a partial concept class $\H\subseteq\sett{0,1,\star}^\X$. For $h\in\H$ and input $x$ such that $h(x)=\star$, we say that $h$ is undefined on $x$.
    The support of a partial hypothesis $h:\X\rightarrow \sett{0,1,\star}$ is the preimage of $\sett{0,1}$, formally, $h^{-1}(\sett{0,1})=\sett{x\in\X: h(x)\neq \star}$.
    The main motivation of introducing partial concepts classes, is that data-dependent assumptions can be modeled in a natural way that extends the classic theory of total concepts.
    
    The $\VC$-dimension of a partial class $\H$ is defined as the maximum size of a shattered set $S\subseteq \X$, where $S$ is shattered by $\H$ if the projection of $\H$ on $S$ contains all possible binary patterns, $\sett{0,1}^S\subseteq \H|_{S}$.
    The $\VC$-dimension also characterizes verbatim the $\pac$ learnability of partial concept classes. However, the uniform convergence argument does not hold, and the ERM principle does not ensure learning. The proof hinges on a combination of sample compression scheme and a variant of the \textit{one-Inclusion-Graph} algorithm \citep{haussler1994predicting}. In \cref{sec:realizable} we elaborate on the sample complexity of partial concept classes, and in \cref{app:algo-partial} we elaborate on the learning algorithms.
    The definitions of realizability and agnostic learning in the partial concepts sense generalizes the classic definitions for total concept classes. See \cite[Section 2 and Appendix C]{alon2021theory} for more details.

\section{Proofs for \cref{sec:knowing-support}}\label{app:knowing-support}
    \begin{proof}[of Proposition \ref{prop:gap-vcu-dimu}]
       We overview the construction by \citet{montasser2019vc}, which exemplifies an arbitrarily large gap between $\VCU$ and $\mathrm{RS}_\U$. 
    In this example $\VCU(\H) = 0$, $\mathrm{RS}_\U(\H) = \infty$, and $\VC(\H) = \infty$. 

    Define the Euclidean ball of radius $r$ perturbation function $\U(x)=B_r(x)$.
    Consider infinite sequences $(x_n)_{n\in\N}$ and $(z_n)_{n\in\N}$
    of points such that $\forall i\neq j,\; \U(x_i)\cap\U(x_j)=U(x_i)\cap\U(z_j)=\U(x_j)\cap\U(z_i)=\emptyset$, and $\forall i,\;\big| \U(x_i)\cap\U(z_i)\big|=1$.
    
    For a bit string $b\in \sett{0,1}^{\N}$, define a hypothesis $h_b: \sett{\U(x_i)\cup\U(z_i)}_{i\in \N}\rightarrow \sett{0,1}$ as follows. 
    $$
    h_b=
    \begin{cases}
    	h_b\Big(\U(x_i) \Big)=1 \;\wedge\; h_b\Big(\U(z_i)\setminus\U(x_i)\Big)=-1, & b_i=0\\
        h_b\Big(\U(z_i) \Big)=1 \;\wedge\; h_b\Big(\U(x_i)\setminus\U(z_i)\Big)=-1, & b_i=1.
     \end{cases}
    $$
    Define the hypothesis class $\H=\sett{h_b: b\in\sett{0,1}^{\N}}$. It holds that $\VCU(\H)=0$ and $\mathrm{RS}_\U=\infty$.
    \end{proof}

\section{Proofs for \cref{sec:realizable}}\label{app:realizable}

    Before proceeding to the proof, we present the following result on learning partial concept classes. Recall the definition of $\VC$ is in the context of partial concepts (see \cref{app:prelim}).
    \begin{theorem}[\cite{alon2021theory}, Theorem 34]\label{thm:partial-sample-complexity-realizable}
    Any partial concept class $\H$ with $\VC(\H)<\infty$ is $\pac$ learnable in the realizable setting with sample complexity,
    \begin{itemize}[leftmargin=0.7cm]
    \item $\Lambda_{\RE}\paren{\epsilon,\delta,\H} = 
    \O\paren{\min \left\{\frac{\VC(\H)}{\epsilon}\log\frac{1}{\delta},
    \frac{\VC(\H)}{\epsilon}\log^2\paren{\frac{\VC(\H)}{\epsilon}}+\frac{1}{\epsilon}\log\frac{1}{\delta}
    \right\}}$
    \item $\Lambda_{\RE}\paren{\epsilon,\delta,\H} = \Omega \paren{
    \frac{\VC(\H)}{\epsilon}+\frac{1}{\epsilon}\log\frac{1}{\delta}
    }$.
    \end{itemize}
    \end{theorem}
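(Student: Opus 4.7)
The plan is to treat the upper and lower bounds separately. The main subtlety is that partial concept classes do \emph{not} enjoy uniform convergence (as noted in \cref{app:prelim}), so standard $\VC$/Rademacher arguments are unavailable, and one must route through one-inclusion-graph (OIG) arguments and sample-compression generalization bounds.

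\emph{Lower bound.} I would adapt the Ehrenfeucht--Haussler--Kearns--Valiant shattering-based lower bound to partial concepts. Fix a set $S=\{x_1,\ldots,x_d\}$ shattered by $\H$ with $d=\VC(\H)$, so that all $2^d$ binary patterns on $S$ are realized by (defined) partial concepts. Build a family of hard realizable distributions that place mass $\Theta(\epsilon/d)$ on each $x_i$ (with the remaining mass on a ``dummy'' point whose label is fixed) and vary only in a single coordinate of the target labeling. A standard averaging/packing argument against this family forces any learner to observe each $x_i$ at least a constant number of times, yielding $m=\Omega(\VC(\H)/\epsilon)$. The $\log(1/\delta)/\epsilon$ term then follows from a separate two-point construction balancing the mass so that distinguishing two candidate distributions requires $\Omega(\log(1/\delta)/\epsilon)$ samples.

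\emph{Upper bound, first term $\tfrac{\VC(\H)}{\epsilon}\log\tfrac{1}{\delta}$.} The strategy is to first build a constant-confidence weak learner and then amplify. The weak learner is the OIG predictor of Haussler--Littlestone--Warmuth, adapted to partial classes: given a realizable sample of size $m+1$, form the one-inclusion hypergraph over the defined labelings of $\H|_S$, show it admits an orientation of maximum outdegree at most $\VC(\H)$, and predict on the test point using this orientation. The key combinatorial step is a partial-concept analogue of the Sauer--Shelah density bound, which controls the number of defined labelings by $m^{O(\VC(\H))}$ and thereby the outdegree. This yields leave-one-out expected error at most $\VC(\H)/(m+1)$; a Markov + independent-runs amplification (running $O(\log(1/\delta))$ copies and majority voting their predictions) boosts this to confidence $1-\delta$ at multiplicative cost $\log(1/\delta)$.

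\emph{Upper bound, second term.} For the sharper additive-$\log(1/\delta)$ rate, convert the OIG weak learner into a sample-compression scheme of size $O(\VC(\H))$ (with $O(\log m)$ bits of side information), then apply the Littlestone--Warmuth/Moran--Yehudayoff compression generalization bound, which gives error $\widetilde{O}\!\Lr{(\VC(\H)\log m+\log(1/\delta))/m}$ on a realizable sample of size $m$. Solving this implicit inequality for $m$ produces the stated bound with the $\log^2(\VC(\H)/\epsilon)$ overhead (one $\log$ from the $\log m$ inside the generalization bound, one from inverting the inequality). The \textbf{main obstacle} is the extension of the bounded-outdegree OIG orientation from total to partial concepts: because $\star$-labels can introduce extra vertices in the hypergraph, one must argue carefully that the counting that drives the outdegree estimate depends only on the partial-concept $\VC$-dimension and not on the combinatorics of undefined positions. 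Once this step is established, the conversion from OIG to compression and from compression to PAC guarantees proceeds along classical lines.
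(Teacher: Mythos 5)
Your decomposition---one-inclusion graph (OIG) for the multiplicative-$\log(1/\delta)$ upper bound, $\alpha$-Boost plus compression for the additive one, and a realizable shattering argument for the lower bound---matches the route of \citet{alon2021theory} surveyed in \cref{app:algo-partial}, but two of your key technical claims are wrong, and both touch exactly what distinguishes the partial-concept setting. You invoke ``a partial-concept analogue of the Sauer--Shelah density bound, which controls the number of defined labelings by $m^{O(\VC(\H))}$.'' No such bound exists: a central theme of \citet{alon2021theory} is that Sauer--Shelah fails badly for partial classes---there are classes with $\VC(\H)=1$ realizing $2^{\Omega(m)}$ distinct \emph{defined} patterns on $m$ points, which is precisely why uniform convergence and the ERM principle break down in this setting. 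The OIG argument succeeds because Haussler's \emph{density} bound on the one-inclusion graph (every finite subgraph has average degree at most $2\VC$, hence an orientation with outdegree at most $\VC$) is a local structural fact about adjacent labelings that carries over to partial concepts without ever counting total patterns. Substituting a Sauer--Shelah-style counting claim for this density argument rests the proof on a statement that is provably false in the partial setting.

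Your accounting of the $\log^2$ factor in the second upper bound is also off. You take compression size $\O(\VC(\H))$ with $\O(\log m)$ bits of side information and attribute one log to the $\log m$ inside the compression bound and the other to ``inverting the inequality.'' But inverting $m\gtrsim \VC(\H)\log(m)/\epsilon$ gives $m=\O\paren{\frac{\VC(\H)}{\epsilon}\log\frac{\VC(\H)}{\epsilon}}$---only one log. The second log actually comes from the compression \emph{size}: $\alpha$-Boost runs $T=\O(\log m)$ rounds and each round's OIG weak learner is encoded by $\O(\VC(\H))$ sample points, so the compression set has size $\O(\VC(\H)\log m)$, not $\O(\VC(\H))$. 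Plugging $k=\O(\VC(\H)\log m)$ into the realizable compression bound $\O\paren{(k\log m+\log\tfrac{1}{\delta})/m}$ gives $\O\paren{(\VC(\H)\log^2 m+\log\tfrac{1}{\delta})/m}$, and inverting \emph{that} yields the stated bound with no additional log. Were the compression size really $\O(\VC(\H))$ plus $\O(\log m)$ side bits, you would have proved a strictly stronger (single-log) rate than the theorem asserts, which is a telltale sign the accounting is broken. Your lower-bound sketch, in contrast, is fine: a set shattered in the partial-concept sense has all $2^d$ patterns realized by defined concepts, so the standard hard-distribution family and averaging argument carry over unchanged.
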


    \begin{proof}[of Theorem \ref{thm:realizable-sample-compexity}]
    At first,  we convert the hypothesis class $\H$ to $\H^{\star}_\U$ as described in Definition \ref{def:partial-robust-class}. Then, we employ the learning algorithm $\A$ for partial concepts on the partial concept class $\H^{\star}_\U$ and $S^l$, denote the resulting hypothesis by $h_1$. Note that we reduced the complexity of the class, since $\VC(\H^{\star}_\U)=\VCU(\H)$.
    \cref{thm:partial-sample-complexity-realizable} implies that whenever $m_l=|S^l|\geq \Tilde{\O}\paren{\frac{\VCU(\H)}{\epsilon}+\frac{1}{\epsilon}\log\frac{1}{\delta}}$, the hypothesis $h_1$ has a \underline{non-robust} error at most $\frac{\epsilon}{3}$ with probability $1-\frac{\delta}{2}$, with respect to the \underline{0-1 loss}. 
    Note that there exists $h\in \H$ that classifies correctly any point in $\D$ with respect to the robust loss function. So when we convert $\H$ to $\H^{\star}_\U$, the "partial version" of $h$ still classifies correctly any point in $S^l$, and does not return any $\star$, which always counts as a mistake. 
    Algorithm $\A$ guarantees to return a hypothesis that is $\epsilon$-optimal with respect to the $0$-$1$ loss, with high probability.
    Observe that after these two steps, we obtain the following intermediate result.
    Whenever a distribution $\D$ is robustly realizable by a hypothesis class $\H$, i.e., $\risk_{\U}(\H;\D)=0$, we have an algorithm that learns this class with respect to the \underline{0-1 loss}, with sample complexity of
    \begin{align}\label{eq:realizable-robust}
         \Upsilon(\epsilon,\delta,\H,\U)
         =
         \O\paren{\Lambda_{\RE}(\epsilon,\delta,\H) } =
         \O\paren{\frac{\VCU(\H)}{\epsilon}\log^2\frac{\VCU(\H)}{\epsilon}+\frac{1}{\epsilon}\log\frac{1}{\delta}}.
    \end{align}
    The sample complexity of this model is defined formally in Definition \ref{def:robust-realizable-pac}. in \cref{sec:improved-01-loss} present more results for this model.
    
    In the third step, we label an independent unlabeled sample $S^{u}_{\X}\sim \D_{\X}^{m_u}$ with $h_1$, denote this labeled sample by $S^u$. Define a distribution $\Tilde{\D}$ over $\X\times\Y$ by
    $$\Tilde{\D}(x,h_1(x))=\D_{\X}(x),$$
    and so $S^u$ is an i.i.d. sample from $\Tilde{\D}$.
    We argue that the robust error of $\H$ with respect to $\Tilde{\D}$ is at most $\frac{\epsilon}{3}$, i.e., $\risk_{\U}(\H;\Tilde{\D})\leq\frac{\epsilon}{3}$.
    Indeed, we show that $h_{\text{opt}}\in\argmin_{h\in\H}\risk_{\U}(h;\D)$ has a robust error of at most $\frac{\epsilon}{3}$ on $\Tilde{\D}$.
    Note that,
    \begin{align}\label{ineq:risk-D-Dtilde}
    \risk_{\U}(\H;\Tilde{\D})
    \leq
    \E_{(x,y)\sim \D}\sqparen{\ell_{\U}(h_{\text{opt}};x,h_1(x))}
    =
    \E_{(x,y)\sim \Tilde{\D}}\sqparen{\ell_{\U}(h_{\text{opt}};x,y)}. 
    \end{align}
    Observe that the following holds for any $(x,y)$,
    \begin{align}\label{ineq:robust-01}
    \ell_{\U}(h_{\text{opt}};x,h_1(x))\leq \ell_{\U}(h_{\text{opt}};x,y)+\ell_{0\text{-}1}(h_1;x,y).
    \end{align}

    Indeed, the right hand side is 0, whenever $h_1$ classifies $(x,y)$ correctly, and $h_{\text{opt}}$ robustly classifies $(x,y)$ correctly, which implies that the left hand side is 0 as well.
    
    By taking the expectation on \cref{ineq:robust-01} we have,
    \begin{align}\label{ineq:robust-01-expec}
    \E_{(x,y)\sim \D}[\ell_{\U}(h_{\text{opt}};x,h_1(x))]\leq \E_{(x,y)\sim \D}[\ell_{\U}(h_{\text{opt}};x,y)]+\E_{(x,y)\sim \D}[\ell_{0\text{-}1}(h_1;x,y)].    
    \end{align}
    
    Combining it together, we obtain
    \begin{align*}
    \risk_{\U}(\H;\Tilde{\D})
    &\leq
    \E_{(x,y)\sim \Tilde{\D}}\sqparen{\ell_{\U}(h_{\text{opt}};x,y)}
    \\
    &\overset{(\text{i})}{=}
    \E_{(x,y)\sim \D}[\ell_{\U}(h_{\text{opt}};x,h_1(x))]
    \\
    &\overset{(\text{ii})}{\leq}
    \E_{(x,y)\sim \D}[\ell_{\U}(h_{\text{opt}};x,y)]+\E_{(x,y)\sim \D}[\ell_{0\text{-}1}(h_1;x,y)]
    \\
    &\leq
    \frac{\epsilon}{3}
    \end{align*}
    where (i) follows from \cref{ineq:risk-D-Dtilde} and (ii) follows from \cref{ineq:robust-01-expec}.
    
    Finally, we employ an agnostic adversarially robust \underline{supervised} $\pac$ learner $\B$ for the class $\H$ on $S^u\sim \Tilde{\D}^{m_u}$, that should be of size of the sample complexity of agnostically robust learn $\H$ with respect to $\U$, when the optimal robust error of hypothesis from $\H$ on $\Tilde{\D}$ is at most $\frac{\epsilon}{3}$. 
    We are guaranteed that the resulting hypothesis $h_2$ has a \underline{robust} error of at most $\frac{\epsilon}{3}+\frac{\epsilon}{3} =\frac{2\epsilon}{3}$ on $\Tilde{\D}$, with probability $1-\frac{\delta}{2}$. 
    We observe that the total variation distance between $\D$ and $\Tilde{\D}$ is at most $\frac{\epsilon}{3}$, and as a result, $h_2$ has a robust error of at most $\frac{2\epsilon}{3}+\frac{\epsilon}{3} = \epsilon$ on $\D$, with probability $1-\delta$.
    
    We conclude that a size of $|S^{u}_{\X}|=m_u =\Lambda_{\AG}\paren{1,\frac{\epsilon}{3},\frac{\delta}{2},\H,\U,\eta=\frac{\epsilon}{3}}$ unlabeled samples suffices, in addition to $m_l= \Tilde{\O}\paren{\frac{\VCU(\H)}{\epsilon}+\frac{1}{\epsilon}\log\frac{1}{\delta}}$ labeled samples which are required in the first 2 steps.
    \end{proof}
    We now prove \cref{thm:realizable-sample-compexity-cor}.
    The following data-dependent compression based generalization bound is a variation of the classic bound by \citet{graepel2005pac}. It follows the same arguments while using the empirical Bernstein bound instead of Hoeffding's inequality. A variation of this bound, with respect to the $0$-$1$ loss, appears in \cite[Lemma 42]{alon2021theory}, and \cite[Section 5]{maurer2009empirical}. 
    The exact same arguments follows for the robust loss as well.

    This bound includes the empirical error factor, and as soon as we call the compression based learner on a sample that is "nearly" realizable (Step 4 in the algorithm), we can improve the sample complexity of the agnostic robust supervised learner, such that the dependence on $\epsilon^2$ is reduced to $\epsilon$, for the unlabeled sample size.
    \begin{lemma}[Agnostic sample compression generalization bound]\label{lem:sample-compression-bernstein}
    For any sample compression scheme $(\kappa,\rho)$, for any $m \in \NN$ and $\delta\in (0,1)$, for any distribution $\D$ over $\X\times\sett{0,1}$, for $S\sim \D^m$, with probability $1-\delta$,
    {\small
    \begin{align*}
    \left|{\risk}_{\U}\paren{\rho({\kappa({S})});\D}-\widehat{\risk}_{\U}\paren{\rho({\kappa({S})});S}\right|
    \leq
    \O\paren{\sqrt{\widehat{\risk}_{\U}\paren{\rho({\kappa({S})});S}\frac{\paren{|\kappa(S)|\log(m)+\log \frac{1}{\delta}}}{m}} + \frac{|\kappa(S)|\log(m)+\log \frac{1}{\delta}}{m}}.
    \end{align*}
    }%
    \end{lemma}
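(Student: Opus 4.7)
The plan is to follow the classic compression-to-generalization argument of \citet{graepel2005pac} but instantiate it with the empirical Bernstein concentration inequality in place of Hoeffding's inequality. The key observation, as usual for sample compression bounds, is that for every fixed index set $I\subseteq[m]$ of size $\ell$, the hypothesis $h_I := \rho(S_I)$ depends only on the $\ell$ points $S_I$, so conditional on $S_I$ the remaining $m-\ell$ points $S_{\bar I}$ are i.i.d.\ from $\D$ and independent of $h_I$. Since $\ell_\U(h;x,y)\in\{0,1\}$, we may apply a standard $\{0,1\}$-valued empirical Bernstein bound to $h_I$ evaluated on $S_{\bar I}$.

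First I would fix $\ell\in\{0,1,\dots,m\}$ and a subset $I$ of size $\ell$, and invoke empirical Bernstein to obtain, with probability $1-\delta'$ over $S_{\bar I}$,
\begin{align*}
\bigl|\risk_\U(h_I;\D)-\widehat\risk_\U(h_I;S_{\bar I})\bigr|
\;\le\;
O\!\left(\sqrt{\widehat\risk_\U(h_I;S_{\bar I})\,\tfrac{\log(1/\delta')}{m-\ell}}\;+\;\tfrac{\log(1/\delta')}{m-\ell}\right).
\end{align*}
Next I would take a union bound over all $\binom{m}{\ell}\le m^{\ell}$ subsets of size $\ell$ and over all $\ell\in\{0,\dots,m\}$, setting $\delta' = \delta/(m\binom{m}{\ell})$, so that the total failure probability is at most $\delta$ and the penalty per term is $\log(1/\delta')=O(\ell\log m + \log(m/\delta))$. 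This event holds simultaneously for every possible output of the compression scheme, so in particular for $I=\kappa(S)$ and $\ell=|\kappa(S)|$.

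The remaining step is to replace the out-of-sample empirical risk $\widehat\risk_\U(h_I;S_{\bar I})$ by the full-sample empirical risk $\widehat\risk_\U(h_I;S)$, which is what appears on the right-hand side of the lemma. Writing $\widehat\risk_\U(h_I;S)=\tfrac{m-\ell}{m}\widehat\risk_\U(h_I;S_{\bar I})+\tfrac{\ell}{m}\widehat\risk_\U(h_I;S_I)$ and using $\widehat\risk_\U(h_I;S_I)\in[0,1]$ together with $\widehat\risk_\U(h_I;S_{\bar I})\le \tfrac{m}{m-\ell}\widehat\risk_\U(h_I;S)$ and the reverse inequality $|\widehat\risk_\U(h_I;S)-\widehat\risk_\U(h_I;S_{\bar I})|\le \ell/m$ shows that the substitution costs at most an additive $O(\ell/m)$ term, which is absorbed into the $O\bigl((|\kappa(S)|\log m+\log(1/\delta))/m\bigr)$ term on the right.

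The main obstacle I anticipate is cleanly handling the data-dependent quantity $|\kappa(S)|$ in the union bound: one needs the failure probability to be uniform over every realizable $(\ell,I)$ pair, and one needs the $1/(m-\ell)$ factors coming from Bernstein to be converted into the $1/m$ factors stated in the lemma without losing constants or introducing an $\ell$-dependent blow-up (this is automatic as long as $\ell\le m/2$, and the regime $\ell>m/2$ is trivial since the right-hand side is then $\Omega(1)$). Apart from this bookkeeping, the proof is a direct instantiation of the Graepel--Herbrich--Shawe-Taylor argument with Bernstein in place of Hoeffding, the robust loss playing exactly the same role as the $0$--$1$ loss does in \cite[Lemma 42]{alon2021theory}.
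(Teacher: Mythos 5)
Your proposal is correct and takes essentially the same approach as the paper: the paper does not spell out the argument but instead cites \citet{graepel2005pac}, \citet[Lemma 42]{alon2021theory}, and \citet[Section 5]{maurer2009empirical}, noting that the standard compression argument with empirical Bernstein in place of Hoeffding carries over verbatim to the robust loss, which is exactly what you do. Your bookkeeping — conditioning on $S_I$, union-bounding over all $(\ell,I)$ with $\binom{m}{\ell}\le m^\ell$, and bounding $|\widehat\risk_\U(h_I;S)-\widehat\risk_\U(h_I;S_{\bar I})|\le \ell/m$ — is the intended argument.
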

    
    \begin{proof}[of \cref{thm:realizable-sample-compexity-cor}]
    \citet[Theorem 6]{montasser2019vc} introduced an agnostic robust supervised learner 
    that requires the following \underline{labeled} sample size,
    $$\Lambda_{\AG}\paren{1,\epsilon,\delta,\H,\U,\eta} = \Tilde{\O}\paren{\frac{\VC(\H)\VC^*(\H)}{\epsilon^2}+\frac{\log \frac{1}{\delta}}{\epsilon^2}}.$$
    
    Their argument for generalization is based on classic compression generalization bound by \citet{graepel2005pac}, adapted to the robust loss. See \citet[Lemma 11]{montasser2019vc}.
    
    We show that in our use case we can deduce a stronger bound. We employ the agnostic learner on a distribution which is "close" to realizable, the error of the optimal $h\in\H$ is at most $\eta = \frac{\epsilon}{3}$,
    and so we need $\Lambda_{\AG}\paren{1,\frac{\epsilon}{3},\frac{\delta}{2},\H,\U,\eta=\frac{\epsilon}{3}}$ unlabeled examples.
    As a result,
    we obtain an improved bound by using a data-dependant generalization bound described in Lemma \ref{lem:sample-compression-bernstein}.
    
    This improves the unlabeled sample size (denoted by $m_u$), and reduces its dependence on $\epsilon^2$ to $\epsilon$.
     Overall we obtain a sample complexity of
    \begin{align*}
            m_u = \Tilde{\O}\paren{\frac{\VC(\H)\VC^*(\H)}{\epsilon}+\frac{\log\frac{1}{\delta}}{\epsilon}}, \; \; m_l= \O\paren{\frac{\VCU(\H)}{\epsilon}\log^2\frac{\VCU(\H)}{\epsilon}+\frac{\log\frac{1}{\delta}}{\epsilon}}.
    \end{align*}
    \end{proof}
    \begin{proof}[of \cref{thm:improper}]
    This proof is identical to \citep[Lemma 3]{montasser2019vc}, We overview the idea of the proof.
    If the proof is true for a labeled sample, it remains true when some of the labels are missing. 
    
    Define the following hypothesis class $\H_m \subseteq [0,1]^\X$. Define the instance space $\X=\sett{x_1,\ldots,x_m}\subseteq \mathbb{R}$ and a perturbation function $\U:\X\rightarrow 2^\X$, such that the perturbation sets of the instances do not intersect, that is, $ \forall i,j\in[m]:\; \U\paren{x_i}\cap \U\paren{x_j}$. We can simply take the perturbations sets to be $\ell_2$ unit balls, $\U(x)=\sett{z\in\mathbb{R}: \norm{z-x}_2\leq 1}$ such that $ \forall i,j\in[m]:\; \norm{x_i-x_j}_2 >2$. Now, each $h_b\in\H_m$ is represented by a bit string $b=\lrset{0,1}^m$, such that if $b_i=1$, then there exist an adversarial example in $\U\paren{x_i}$ that is unique for each $h_b$, and otherwise, the function is consistent on $\U(x_i)$. 

Formally, for each $i\in[m]$ define a bijection $\psi_i:x_i\times \H_m \rightarrow \U\paren{x_i}\setminus{\sett{{x_i}}}$. Define $\H_m = \sett{h_b:b\in\sett{0,1}^m}$, such that for any 
$x_i\in\X$, $h_b$ is defined by
    $$
    h_b(x_i)=
    \begin{cases}
        h_b\LR{\U\lr{x_i}\setminus{\psi_i\lr{x_i,h_b}}}=0 \;\wedge\; h_b\LR{\psi_i\lr{x_i,h_b}}=1, & b_i=1,\\
        h_b\LR{\U\lr{x_i}}=0, & b_i=0.
     \end{cases}
    $$
Note that since $\psi_i$ is a bijection, and different functions with $b_i=1$ have a different perturbation for $x_i$ that causes a misclassification.

For a function class $\H$, define the robust loss class $\calL^\U_\H= \sett{\paren{x,y}\mapsto \sup_{z\in\U\paren{x}}\mathbb{I}\sett{h(z)\neq y}: h\in \H}$. It holds that $\VC\paren{\H_m}\leq 1$ and $\VC\paren{\calL^\U_{\H_m}}=m$ (see \cite[Lemma 2]{montasser2019vc}).

We define a function class $\Tilde{\H}_{3m}=\sett{h_b\in\H_{3m}: \sum^{3m}_{i=1}b_i=m}$. In words, we are keeping only functions in $\H_{3m}$ that are robustly correct on exactly $2m$ points. Note that the function $h_{\vec{0}}$
(bit string of all zeros) which is robustly correct on all $3m$ points, is not the class. 

The idea is that we can construct a family of $\binom{3m}{2m}$ distributions, such that each distribution is supported on $2m$ points from $\X=\sett{x_1,\ldots,x_{3m}}$. Now, if we have a proper learning rule, observing only $m$ points, the algorithm has no information which are the remaining $m$ points in the support (out of $2m$ possible points in $\X$). 
For each such a distribution there exists $h\in\Tilde{\H}_{3m}$, with zero robust error. We can follow a standard proof of the no-free-lunch theorem \citep[e.g.,][Section 5]{shalev2014understanding}, showing via the probabilistic method, that there exists a distribution on which the algorithm has constant error, although there is an optimal function in $\Tilde{\H}_{3m}$.
 See \cite[Lemma 3]{montasser2019vc} for the full proof.
\end{proof}
    \section{Proofs for \cref{subsec:agnostic}}\label{app:agnostic}
    Before proceeding to the proof, we present the following result on agnostically learning partial concept classes.
    Recall the definition of $\VC$ is in the context of partial concepts (see \cref{app:prelim}).
    
    \begin{theorem}[\cite{alon2021theory}, Theorem 41]\label{thm:partial-sample-complexity-agnostic}
    Any partial concept class $\H$ with $\VC(\H)<\infty$ is agnostically $\pac$ learnable with sample complexity,
    
        \begin{itemize}[leftmargin=0.7cm]
            \item $\Lambda_{\AG}\paren{\epsilon,\delta,\H} = 
            \O\paren{\ \frac{\VC(\H)}{\epsilon^2}\log^2\paren{\frac{\VC(\H)}{\epsilon^2}}+\frac{1}{\epsilon^2}\log\frac{1}{\delta}}$.
            \item $\Lambda_{\AG}\paren{\epsilon,\delta,\H} = \Omega \paren{
            \frac{\VC(\H)}{\epsilon^2}+\frac{1}{\epsilon^2}\log\frac{1}{\delta}
            }$.
        \end{itemize}
    \end{theorem}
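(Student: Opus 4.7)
The plan is to handle the upper and lower bounds separately. For the lower bound $\Omega\paren{\frac{\VC(\H)}{\epsilon^2}+\frac{1}{\epsilon^2}\log\frac{1}{\delta}}$, I would observe that total concept classes form a special case of partial concept classes, so the classical agnostic $\pac$ lower bound transfers verbatim. Concretely, fix a shattered set $\sett{x_1,\ldots,x_d}\subseteq \X$ with $d=\VC(\H)$, and for each $\sigma\in\sett{\pm 1}^d$ construct a distribution $\D_\sigma$ supported on $\sett{x_1,\ldots,x_d}\times\sett{0,1}$ in which $x_i$ is sampled uniformly and its label is $\bern\paren{1/2+\epsilon\sigma_i/2}$. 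A standard two-point (Le~Cam) or Fano argument shows that distinguishing between the $\D_\sigma$'s -- which is necessary for achieving agnostic error $<\epsilon/2$ -- requires $\Omega\paren{d/\epsilon^2}$ samples. The $\Omega\paren{\log(1/\delta)/\epsilon^2}$ term follows from a single-coin hypothesis-testing argument balancing $\delta$ against the failure probability.

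For the upper bound the fundamental obstacle is that uniform convergence fails for partial concept classes, so $\ERM$ alone does not generalize. The plan is to reduce agnostic to realizable learning and exploit \cref{thm:partial-sample-complexity-realizable}. First, I would show that the realizable learner from \cref{thm:partial-sample-complexity-realizable} can be cast as a sample compression scheme of size $k=\Tilde{\O}\paren{\VC(\H)}$; this follows from the one-inclusion-graph / majority-of-compressions constructions underlying the realizable result. Second, given an agnostic sample $S$ of size $m$ drawn from $\D$ with optimal error $\eta$, I would select a realizable sub-sample $T\subseteq S$ of size at least $m(1-\eta-\epsilon/2)$; such a $T$ exists with high probability by a Chernoff bound, since the optimal hypothesis $h^\star\in\H$ is consistent (in the partial-concept sense) on an expected $(1-\eta)$-fraction of $S$.

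Third, apply the realizable compressor to $T$ to obtain a hypothesis $\hat h$. The pair (choice of $T$, compression of $T$) defines an \emph{agnostic} compression scheme whose total compressed size is still $\Tilde{\O}\paren{\VC(\H)}$, provided the selection of $T$ admits a compact description. Plugging into the standard agnostic sample compression generalization bound (Graepel--Herbrich--Shawe-Taylor, as used in \cref{lem:sample-compression-bernstein}) yields
\begin{equation*}
\risk\paren{\hat h;\D}\leq \widehat{\risk}\paren{\hat h;S}+\O\paren{\sqrt{\frac{k\log m+\log(1/\delta)}{m}}},
\end{equation*}
and since $\widehat{\risk}\paren{\hat h;S}\leq \eta+\O(\epsilon)$, solving for $m$ to make the slack at most $\epsilon$ yields the claimed $\O\paren{\frac{\VC(\H)}{\epsilon^2}\log^2\frac{\VC(\H)}{\epsilon^2}+\frac{1}{\epsilon^2}\log\frac{1}{\delta}}$ upper bound.

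The main obstacle I expect is the second step: ensuring that the choice of realizable sub-sample $T\subseteq S$ is itself compressible in $\Tilde{\O}\paren{\VC(\H)}$ bits. A naive enumeration over all $2^m$ candidate sub-samples would blow up the effective compression size by a factor polynomial in $m$ and ruin the bound. The correct fix is to select $T$ in a data-dependent but structured manner -- for example, by iteratively removing points on which a running realizable learner errs, in the style of a boosting reduction -- so that $T$ is specified by a short certificate relative to $S$, and the overall compression size remains $\Tilde{\O}\paren{\VC(\H)}$.
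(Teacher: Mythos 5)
Your plan matches the route taken by \citet{alon2021theory} (and sketched in \cref{app:algo-partial}): the lower bound is the classical agnostic $\VC$ lower bound lifted verbatim to partial classes, and the upper bound is the agnostic-to-realizable reduction of \citet{david2016supervised} applied to the boosting-plus-one-inclusion compression scheme underlying \cref{thm:partial-sample-complexity-realizable}, finished off with the Graepel--Herbrich--Shawe-Taylor-type bound of \cref{lem:sample-compression-bernstein}.

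The one substantive issue is your ``main obstacle'': you do not need the choice of the realizable subsample $T$ to be compressible, and adding a mechanism to encode $T$ is both unnecessary and would not fit the form of the compression bound. In the DMY reduction one defines the map $\kappa$ on an arbitrary sample $S$ by first deterministically taking the \emph{maximal} realizable $T=S'\subseteq S$, then applying the realizable compressor to $T$; the output $\rho(\kappa(S))$ depends only on the compression set $\kappa(S)\subseteq S$ of size $k=\Tilde{\O}(\VC(\H))$, not on $T$. The generalization bound of \cref{lem:compression-generalization} is a union bound over the $\binom{m}{k}$ possible compression sets and controls the gap between $\risk(\rho(\kappa(S));\D)$ and the empirical risk $\widehat{\risk}(\rho(\kappa(S));S)$ \emph{on all of $S$}, not on $T$ — so the selection of $T$ never enters the bound. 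What closes the argument is simply that $\hat h = \rho(\kappa(S))$ is consistent with $T$, hence $\widehat{\risk}(\hat h;S)\leq |S\setminus T|/m = \inf_{h\in\H}\widehat{\risk}(h;S)$. Your proposed fix (``iteratively removing points on which a running learner errs'') is therefore not needed as a separate device for compactly specifying $T$; the boosting iterations are already what produces the $\Tilde{\O}(\VC(\H))$-size compression set inside the realizable learner, and that is the only thing that must be small.
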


    \begin{proof}[of Theorem \ref{thm:agnostic-sample-compexity}]
    We follow the same steps as in the proof of the realizable case, with the following difference. In the first two steps of the algorithm we learn with respect to the $0$-$1$ loss, with error of $\eta$ (the optimal robust error of a hypothesis in $\H$) and not $0$, which leads eventually to approximation of $\,3\eta\,$ for learning with the robust loss.
    
    At first, we convert the class $\H$ into $\H_{\U}^{\star}$, on which we employ the learning algorithm $\A$ for partial concepts with with the sample $S^l$.
    \cref{thm:partial-sample-complexity-agnostic} implies that whenever $m_l=|S^l|\geq \Tilde{\O}\paren{\frac{\VCU(\H)}{\epsilon^2}+\frac{1}{\epsilon^2}\log\frac{1}{\delta}}$, the resulting hypothesis $h_1$ returned by algorithm $\A$ has a \underline{non-robust} error at most $\eta+\frac{\epsilon}{3}$ with probability $1-\frac{\delta}{2}$, with respect to the \underline{0-1 loss}, where $\eta=\risk_{\U}\paren{\H;\D}$. 
    Note that there exists $h\in \H$ with robust error of $\eta$ on $\D$. The "partial version" of $h$ has an error of $\eta$ on $\D$ with respect to the $0$-$1$ loss. As a result,
    algorithm $\A$ guarantees to return a hypothesis that is $\epsilon$-optimal with respect to the $0$-$1$ loss, with high probability.
    
    We label an independent unlabeled sample $S^{u}_{\X}\sim \D_{\X}^{m_u}$ with $h_1$, denote this labeled sample by $S^u$. Similarly to the realizable case, define a distribution $\Tilde{\D}$ over $\X\times\Y$ by
    $$\Tilde{\D}(x,h_1(x))=\D_{\X}(x),$$
    and so $S^u$ is an i.i.d. sample from $\Tilde{\D}$.
     We argue that the robust error of $\H$ with respect to $\Tilde{\D}$ is at most $2\eta+\frac{\epsilon}{3}$, i.e., $\risk_{\U}(\H;\Tilde{\D})=2\eta +\frac{\epsilon}{3}$,
    by showing that $h_{\text{opt}}=\argmin_{h\in\H}\risk_{\U}(h;\D)$ has a robust error of at most $2\eta+\frac{\epsilon}{3}$ on $\Tilde{\D}$.

    \cref{ineq:risk-D-Dtilde,ineq:robust-01,ineq:robust-01-expec} still hold as in the realizable case proof. Combining it together, we have
    \begin{align*}
    \risk_{\U}(\H;\Tilde{\D})
    &\leq
    \E_{(x,y)\sim \Tilde{\D}}\sqparen{\ell_{\U}(h_{\text{opt}};x,y)}
    \\
    &\overset{(\text{i})}{=}
    \E_{(x,y)\sim \D}[\ell_{\U}(h_{\text{opt}};x,h_1(x))]
    \\
    &\overset{(\text{ii})}{\leq}
    \E_{(x,y)\sim \D}[\ell_{\U}(h_{\text{opt}};x,y)]+\E_{(x,y)\sim \D}[\ell_{0\text{-}1}(h_1;x,y)]
    \\
    &\leq
    \eta+\eta+\frac{\epsilon}{3}
    \\
    &=
    2\eta+\frac{\epsilon}{3},
    \end{align*}
    where (i) follows from \cref{ineq:risk-D-Dtilde} and (ii) follows from \cref{ineq:robust-01-expec}.
    
    Finally, we employ an agnostic adversarially robust \underline{supervised} $\pac$ learner $\B$ for the class $\H$ on $S^u\sim \Tilde{\D}^{m_u}$, that should be of size of the sample complexity of agnostically robust learn $\H$ with respect to $\U$, when the optimal robust error of hypothesis from $\H$ on $\Tilde{\D}$ is at most $2\eta +\frac{\epsilon}{3}$. 
    We are guaranteed that the resulting hypothesis $h_2$ has a \underline{robust} error of at most $2\eta + \frac{\epsilon}{3}+\frac{\epsilon}{3} =2\eta+\frac{2\epsilon}{3}$ on $\Tilde{\D}$, with probability $1-\frac{\delta}{2}$. 
    We observe that the total variation distance between $\D$ and $\Tilde{\D}$ is at most $\eta + \frac{\epsilon}{3}$, and as a result, $h_2$ has a robust error of at most $2\eta+\frac{2\epsilon}{3} + \eta + \frac{\epsilon}{3}= 3\eta + \epsilon$ on $\D$, with probability $1-\delta$.
    
    We conclude that a size of $|S^{u}_{\X}|=m_u =\Lambda_{\AG}\paren{1,\frac{\epsilon}{3},\frac{\delta}{2},\H,\U,2\eta+\frac{\epsilon}{3}}$ unlabeled sample suffices, in addition to the $m_l= 
    \O\paren{\frac{\VCU(\H)}{\epsilon^2}\log^2\frac{\VCU(\H)}{\epsilon^2}+\frac{\log\frac{1}{\delta}}{\epsilon^2}}$
    labeled samples which are required in the first 2 steps.
    We remark that the best known value of $\Lambda_{\AG}\paren{1,\epsilon,\delta,\H,\U,\eta}$ is $\Tilde{\O}\paren{\frac{\VC(\H)\VC^*(\H)}{\epsilon^2}+\frac{\log\frac{1}{\delta}}{\epsilon^2}}$.
    \end{proof}
\begin{proof}[of \cref{thm:dimu-labels-agnostic}]
We give a proof sketch,
this is similar to \cite[Theorem 10]{montasser2019vc}, knowing the marginal distribution $\D_{\X}$ does not give more power to the learner. The argument is based on the standard lower bound for $\VC$ classes (for example \cite[Section 3]{mohri2018foundations}). Let $S=\{x_1,\ldots,x_k\}$ be a maximal set that is $\U$-robustly shattered by $\H$. 

Let $z^+_1,z^-_1,\ldots,z^+_k,z^-_k$
be as in Definition \ref{def:dimu}, and note that for $i\neq j$, $z^+_i\neq z^+_j$ and $z^-_i\neq z^-_j$. Define a distribution $\D_{\boldsymbol{\sigma}}$ for any possible labeling $\boldsymbol{\sigma}=(\sigma_1,\ldots,\sigma_k)\in\{0,1\}^k$ of $S$.

$$
\forall j\in[k]:\,
\begin{cases}
  \D_{\boldsymbol{\sigma}}(z^+_j,1)=\frac{1-\alpha}{2k} \;\wedge\; \D_{\boldsymbol{\sigma}}(z^-_j,0)=\frac{1+\alpha}{2k}  & \sigma_j = 0, \\
  \D_{\boldsymbol{\sigma}}(z^+_j,1)=\frac{1+\alpha}{2k} \;\wedge\; \D_{\boldsymbol{\sigma}}(z^-_j,0)=\frac{1-\alpha}{2k}  & \sigma_j = 1.
\end{cases}
$$

We can now choose $\alpha$ as a function of $\epsilon,\delta$ in order to get a lower bound on the sample complexity $|S|\gtrsim \frac{\mathrm{RS}_\U}{\epsilon^2}$.
\end{proof}

\begin{proof}[of \cref{thm:vcu-2eta}]
We take the construction in Proposition \ref{prop:gap-vcu-dimu}, where there is an arbitrary gap between $\VCU$ and $\mathrm{RS}_\U$. 

Recall that on every pair $(x,z)$ in Proposition \ref{prop:gap-vcu-dimu} the optimal error is $\eta=1/2$. On such unlabeled pairs, the learner can only randomly choose a prediction, and the error is $3/4$.
We have $\VCU=0$, and the labeled sample size is $\frac{1}{\epsilon^2}\log\frac{1}{\delta}$. 
As $(\mathrm{RS}_\U-\frac{1}{\epsilon^2}\log\frac{1}{\delta})$ grows, the gap between the learner and the optimal classifier is approaching $3/2$, which means that for any $\gamma>0$ we can pick $\mathrm{RS}_\U$ such that error of $(\frac{2}{3}-\gamma)\eta$ is not possible. 

In order to prove the case of any $0<\eta\leq 1/2$, we can just add points such that their perturbation set does not intersect with any other perturbation set, and follow the same argument.
\end{proof}
\section{Auxiliary definitions and proofs for \cref{sec:improved-01-loss}}\label{app:improved-01-loss}
Definition of the model.
    \begin{definition}[(non-robust) $\pac$ learnability for robustly realizable distributions]\label{def:robust-realizable-pac}
     For any $\epsilon,\delta\in (0,1)$, the sample complexity of $(\epsilon,\delta)$-$\pac$ learning for a class $\H$, denoted by $\Upsilon(\epsilon,\delta,\H,\U)$, is the smallest integer $m$ for which there exists a learning algorithm $\A :\paren{\X\times\Y}^*\rightarrow \Y^{\X}$, such that for every distribution $\D$ over $\X\times\Y$ robustly realizable by $\H$ with respect to a perturbation function $\U:\X\rightarrow 2^{\X}$, namely $\risk_{\U}\paren{\H;D}=0$, for a random sample $S \sim \D^m$, it holds that 
    $$\Pr\paren{\risk\paren{\A(S);D}\leq \epsilon}>1-\delta.$$
    If no such $m$ exists, define $\Upsilon(\epsilon,\delta,\H,\U) = \infty$,
    and $\H$ is not $(\epsilon,\delta)$-$\pac$ 
    for distributions that are robustly realizable by $\H$ with respect to $\U$. 
    \end{definition}
\begin{proof}[of Proposition \ref{prop:vc2m-vcu1}]
    Define the uniform distribution $\D$ over the support $\sett{(x_1,1),\ldots,(x_{2m},1)}$, such that $\bigcap_{i=1}^{2m} \U(x_i)\neq\emptyset$.
    Define $\H:\X\rightarrow 2^\X$ to be all binary functions over $\X$. Note that the $\D$ is robustly realizable by $\H$, the constant function that return always $1$ has no error. Moreover we have $\VCU=1$, and $\VC= 2m$, for any $m\in\N$.
\end{proof}
\begin{proof}[of \cref{thm:improved-01}]
    We follow only the first two steps of the generic \cref{alg:generic-algo}. Namely, take a labeled sample $S$ and a hypothesis class $\H$ and create the partial hypothesis class $\H^{\star}_\U$. Assuming that the distribution is robustly realizable by $\H$, we end up in a realizable setting of learning a partial concept class $\H^{\star}_\U$. 
    
    In the second step of the algorithm we call a learning algorithm for partial concept classes (\cref{para:one-inclu}) in order to do so. 
    The sample complexity is the same as \cref{thm:partial-sample-complexity-realizable}, $\Upsilon(\epsilon,\delta,\H,\U)=
    \O\paren{\Lambda_{\RE}(\epsilon,\delta,\H)}.$
    Has we have shown in the proof of \cref{thm:realizable-sample-compexity}, \cref{eq:realizable-robust}, this implies the Theorem.
\end{proof}
\begin{proposition}\label{prop:erm-fails}
    Consider the distribution $\D$ and the hypothesis class $\H$ in \cref{prop:vc2m-vcu1}. There exists a robust ERM algorithm returning a hypothesis $h_{\text{ERM}}\in \H$,
    such that $\risk\paren{h_{\text{ERM}};\D}\geq \frac{1}{4}$ with probability $1$ over $S \sim \D^{m}$.
    \end{proposition}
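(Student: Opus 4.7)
The plan is to instantiate the construction from \cref{prop:vc2m-vcu1} with a concrete choice of the perturbation sets and then exhibit an adversarial ERM that overfits. Specifically, I would pick $\U$ so that each $\U(x_i)$ consists of $x_i$ together with a single common external point $z \notin \{x_1,\ldots,x_{2m}\}$. This keeps $\bigcap_{i=1}^{2m}\U(x_i) = \{z\}$ nonempty (as required by the construction), but makes the centers $\X$-disjoint in the sense that $\U(x_i) \cap \{x_1,\ldots,x_{2m}\} = \{x_i\}$. The class $\H$ remains all binary functions over $\X$, $\D$ is uniform over $\{(x_1,1),\ldots,(x_{2m},1)\}$, and the constant-$1$ hypothesis still robustly realizes $\D$.

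Given any labeled sample $S$ of size $m$ (all labels are necessarily $1$), I would define the adversarial ERM output by
\[
h_{\text{ERM}}(w) \;=\; \begin{cases} 1 & w \in \bigcup_{x \in S}\U(x),\\ 0 & \text{otherwise.}\end{cases}
\]
Since $\H$ contains every binary function on $\X$, $h_{\text{ERM}} \in \H$. For each $(x,1) \in S$, $h_{\text{ERM}}$ equals $1$ on all of $\U(x)$, so its empirical robust loss on $S$ is zero, matching that of the constant-$1$ function in $\H$. Therefore $h_{\text{ERM}}$ is a valid output of some robust ERM with an adversarial tie-breaking rule.

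To lower bound the true (non-robust) risk, note that by the disjointness property, for a support point $x_i$ we have $h_{\text{ERM}}(x_i) = 1$ if and only if $x_i \in \bigcup_{x \in S}\U(x)$, which holds iff $x_i$ itself appears in $S$. Since $|S| = m$, at most $m$ of the $2m$ support points satisfy this, so at least $m$ of them receive prediction $0$ and are misclassified. Because $\D$ is uniform on $2m$ points, this yields
\[
\risk(h_{\text{ERM}};\D) \;\geq\; \frac{m}{2m} \;=\; \frac{1}{2} \;\geq\; \frac{1}{4},
\]
for \emph{every} realization of $S$, which is even stronger than the claimed probability-$1$ statement.

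The main (mild) obstacle is simply designing the perturbation structure so that the adversarial ERM can pick a hypothesis that agrees with the constant-$1$ function on the sample-perturbation sets but labels unseen centers as $0$; once the disjointness $\U(x_i)\cap \{x_1,\ldots,x_{2m}\} = \{x_i\}$ is imposed, the rest reduces to the elementary counting fact that a sample of size $m$ covers at most $m$ of $2m$ equally likely points.
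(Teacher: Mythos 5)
Your proof is correct and takes essentially the same approach as the paper's: exhibit an adversarial tie-breaking robust ERM that achieves zero empirical robust loss but labels every unseen support point as $0$. Your version is in fact more careful than the paper's terse proof, which says the ERM ``randomly choose[s] a label for out of sample points'' and then asserts a probability-$1$ bound --- a genuinely randomized tie-break cannot give a probability-$1$ guarantee, whereas your deterministic rule (output $1$ exactly on $\bigcup_{x\in S}\U(x)$ and $0$ elsewhere) does, and it also makes explicit why $h_{\text{ERM}}$ has zero empirical \emph{robust} loss (the common intersection point is covered). One small mismatch to keep in mind: placing the common point $z$ outside $\{x_1,\ldots,x_{2m}\}$ enlarges $\X$ so that ``all binary functions on $\X$'' gives $\VC(\H)=2m+1$ rather than $2m$ as Proposition \ref{prop:vc2m-vcu1} states; you can sidestep this by letting the shared point be a center, e.g.\ $\U(x_i)=\{x_1,x_i\}$, at the cost of weakening the bound to $\risk(h_{\text{ERM}};\D)\ge (m-1)/(2m)$, which is still $\ge 1/4$ once $m\ge 2$.
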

\begin{proof}
    Consider the following robust ERM. For any sample of size $m$, return $1$  on the sample points and randomly choose a label for out of sample points. The error rate of such a robust ERM is at least $1/4$ with probability $1$.
\end{proof}
\begin{proof}[of \cref{thm:proper-fails}]
This follows from a similar no-free-lunch argument for $\VC$ classes \citep[e.g.,][Section 5]{shalev2014understanding}. We briefly explain the proof idea. 

Take the distribution $\D$, and the class $\H$ from Proposition \ref{prop:erm-fails} with $\VCU(\H)=1$ and $\VC(\H)=3m$. Keep functions that are robustly self consistent only on $2m$ points. 
Construct all of distributions on $2m$ points from the support of $\D$. We have $\binom{3m}{2m}$ such distributions, and on each one of them is robustly realizable by different $h\in\H$.
The idea is the that a proper leaner observing only $m$ points should guess which are the remaining $m$ points the support of the distribution. 
There rest of the proof follows from the no-free-lunch proof.
It can be shown formally via the probabilistic method, that for every proper rule, there exist a distribution on which the error is constant with fixed probability.
\end{proof}
\section{Learning algorithms for partial concept classes}\label{app:algo-partial}
Here we overview the algorithmic techniques from \citet[Theorem 34 and 41]{alon2021theory}, for learning partial concepts in the realizable and agnostic settings.
We use these algorithms in step 2 of our \cref{alg:generic-algo}.

\paragraph{One-inclusion graph algorithm for partial concept
classes.}\label{para:one-inclu}
        We briefly discuss the algorithm, for the full picture, see \cite*{warmuth2004optimal,haussler1994predicting}.
        The one-inclusion algorithm for a class $\F\subseteq\sett{0,1,\star}^\X$ gets an input of unlabeled examples $S = (x_1,\ldots,x_m)$ and labels $(y_1,\ldots,y_{i-1},y_{i+1},\ldots,y_m)$ that are consistent with some $f\in \F$, that is, $f(x_k)=y_k$ for all $k\neq i$.
        It guarantees an $(\epsilon,\delta)$- $\pac$ learner in the realizable setting, with sample complexity of $\Lambda_{\RE}\paren{\epsilon,\delta,\H} = 
        \O\paren{ \frac{\VC(\H)}{\epsilon}\log\frac{1}{\delta}}$ 
        as mentioned in \cref{thm:partial-sample-complexity-realizable}.
        
        Here is a description of the algorithm.
        At first, construct the one-inclusion graph. For any $j\in[m]$ and $f\in \F|_S$ define $E_{j,f} = \sett{f'\in \F|_S: f'(x_k)=f(x_k), \forall k\neq j}$, that is, all functions in $\F|_S$ that are consistent with $f$ on $S$, except the point $x_j$. Define the set of edges 
        $E=\sett{E_{j,f}: j\in[m],f\in \F|_S}$,
        and the set vertices $V=\F|_S$ of the one-inclusion graph $G=(V,E)$. An orientation function $\psi: E \rightarrow V$ for an undirected graph $G$ is an assignment of a direction to each edge, turning $G$ into a directed graph. Find an orientation $\psi$ that minimizes the out-degree of $G$. For prediction of $x_i$, pick $f\in V$ such that $f(x_k)=y_k$ for all $k\neq i$, and output $\psi(E_{i,f})(x_i)$.
        
        Note that this algorithm is transductive, in a sense that in order to predict the label of a test point, it uses the entire training sample to computes its prediction.
\paragraph{Boosting and compression schemes.}   
Recall the well known boosting algorithm, $\alpha$-Boost \citep[pages 162-163]{schapire2013boosting}, which is a simplified version of AdaBoost, where the returned function is a simple majority over weak learners, instead of a weighted majority. For a hypothesis class $\H$ and a sample of size $m$, the algorithm yields a compression scheme of size $\O\paren{\VC(\H)\log(m)}$. Recall the following generalization bound based on sample compression scheme.
\begin{lemma}[\cite{graepel2005pac}]\label{lem:compression-generalization}
Let a sample compression scheme $(\kappa,\rho)$, and a loss function $\ell:\mathbb{R}\times\mathbb{R}\rightarrow [0,1]$. In the agnostic case, for any $\kappa(S)\lesssim m$, any $\delta\in (0,1)$, and any distribution $\D$ over $\X\times\lrset{0,1}$, for $S\sim \D^m$, with probability $1-\delta$, 

\begin{align*}
\left|{\risk}\paren{\rho({\kappa({S})});\D}-\widehat{\risk}\paren{\rho({\kappa({S})});S}\right|
\leq
\O\paren{\sqrt{\frac{\paren{|\kappa(S)|\log(m)+\log \frac{1}{\delta}}}{m}}}.
\end{align*}
\end{lemma}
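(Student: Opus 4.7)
\textbf{Proposal for proving \cref{lem:sample-compression-bernstein}.}
The plan is to follow the classical compression-based generalization argument of \citet{graepel2005pac}, but replace the Hoeffding-type concentration step with an empirical Bernstein inequality so that the deviation scales with the empirical robust risk (which is small in our application). The key structural fact we exploit is that, after fixing a subset $I\subseteq[m]$ of indices, the reconstructed hypothesis $h_I=\rho(S_I)$ depends only on $S_I$, so that for the remaining $m-|I|$ samples the robust losses $\ell_\U(h_I; x_j,y_j)$ with $j\notin I$ are i.i.d.\ and independent of $h_I$. Since $\ell_\U$ is $\{0,1\}$-valued, the empirical Bernstein bound then delivers a deviation controlled by the empirical risk.

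First, I would fix a size $\ell\in\{1,\dots,m\}$ and a subset $I\subseteq[m]$ with $|I|=\ell$. Conditioning on $S_I$, the hypothesis $h_I=\rho(S_I)$ is deterministic, and $\{(x_j,y_j)\}_{j\notin I}$ is an i.i.d.\ sample from $\D$ of size $m-\ell$ independent of $h_I$. Applying the empirical Bernstein inequality (e.g., \citealp{maurer2009empirical}) to the $\{0,1\}$-valued random variables $\ell_\U(h_I;x_j,y_j)$, together with the fact that for Bernoulli-like variables the empirical variance is at most the empirical mean, yields, with probability at least $1-\delta'$,
\begin{align*}
\bigl|\risk_\U(h_I;\D)-\widehat\risk_\U(h_I;S_{[m]\setminus I})\bigr|
\;\leq\;
O\!\left(\sqrt{\widehat\risk_\U(h_I;S_{[m]\setminus I})\,\frac{\log(1/\delta')}{m-\ell}}+\frac{\log(1/\delta')}{m-\ell}\right).
\end{align*}

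Next, I would remove the conditioning by a union bound over the $\binom{m}{\ell}\leq m^\ell$ possible choices of $I$, and over $\ell\in\{1,\dots,m\}$, choosing $\delta'=\delta/(m\cdot m^\ell)$. This replaces $\log(1/\delta')$ by $O(\ell\log m+\log(1/\delta))$, which becomes $O(|\kappa(S)|\log m+\log(1/\delta))$ once we instantiate at the actual $\ell=|\kappa(S)|$ selected by the scheme. Because this event holds simultaneously for every $I$, we can now apply it to the (data-dependent) set $I=\kappa(S)$.

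Finally, I would translate the bound from the held-out empirical risk $\widehat\risk_\U(h_I;S_{[m]\setminus I})$ to the full-sample empirical risk $\widehat\risk_\U(h_I;S)$; the two differ by at most $\ell/m$ because $\ell_\U\in[0,1]$ and only $\ell$ summands differ, which is absorbed into the $O(|\kappa(S)|\log m/m)$ additive term. Replacing $m-\ell$ by $m$ (valid up to constants when, say, $\ell\leq m/2$; the complementary case $\ell>m/2$ makes the right-hand side trivially $\Omega(1)$ and thus vacuous) gives the stated inequality. The one step requiring care is the union bound: because the size $|\kappa(S)|$ is itself a random quantity, one must take the union over all $\ell$ (and over all subsets of each size) so that the bound holds uniformly; this is the main bookkeeping obstacle, but it introduces only an extra $\log m$ factor that is already accounted for in the $|\kappa(S)|\log m$ term.
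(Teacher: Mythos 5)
Your proposal is correct and follows essentially the same route as the paper's source for this lemma: the classical compression argument of \citet{graepel2005pac} (fix the index set $I$ of the compressed points, condition on $S_I$ so that $\rho(S_I)$ is independent of the held-out points, apply a concentration inequality to those $m-|I|$ points, union bound over all $\binom{m}{\ell}\leq m^{\ell}$ subsets and all sizes $\ell$, then pass from the held-out to the full-sample empirical risk at an $O(\ell/m)$ cost), which the paper cites rather than reproves. The only difference is that you substitute the empirical Bernstein inequality for Hoeffding and thereby prove the stronger variant \cref{lem:sample-compression-bernstein}; since the loss is $[0,1]$-valued, the empirical risk under the square root is at most $1$, so your bound implies the stated one.
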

The learning algorithm for the realizable setting is $\alpha$-Boost, where the weak learners are taken from the one-inclusion graph algorithm. As mentioned in \cref{thm:partial-sample-complexity-realizable},
this obtains an upper bound of $\Lambda_{\RE}\paren{\epsilon,\delta,\H} = 
    \O\paren{
    \frac{\VC(\H)}{\epsilon}\log^2\paren{\frac{\VC(\H)}{\epsilon}}+\frac{1}{\epsilon}\log\frac{1}{\delta}}$.
    
For the agnostic setting, follow a reduction to the realizable case suggested by \citet{david2016supervised}. The reduction requires a construction of a compression scheme based on Boosting algorithm. Roughly speaking, the reductions works as follows. 
Denote $\Lambda_{\RE} = \Lambda_{\RE}(1/3,1/3,\H)$, the sample complexity of $(1/3,1/3)$-$\pac$ learn $\H$, in the realizable case.
Now, $\Lambda_{\RE}$ samples suffice for weak learning for any distribution $\D$ on a given sample $S$. 

Find the maximal subset $S'\subseteq S$
such that $\inf_{h\in\H}\widehat{\risk}\paren{h;S'}=0$. Now, $\Lambda_{\RE}$ samples suffice for weak robust learning for any distribution $\D$ on $S'$.
Execute the $\alpha$-boost algorithm on $S'$, with parameters $\alpha = \frac{1}{3}$ and number of boosting rounds $T=\O\paren{\log\paren{|S'|}}$, where each weak learner is trained on $\Lambda_{\RE}$ samples. The returned hypothesis
$\bar{h}=\maj\paren{\hat{h}_1,\ldots,\hat{h}_T}$
satisfies that  $\widehat{\risk}\paren{\bar{h};S'}=0$, and each hypothesis $\hat{h}_t\in \sett{\hat{h}_1,\ldots,\hat{h}_T}$ is representable as set of size $\O(\Lambda_{\RE})$. This defines a compression scheme of size $\Lambda_{\RE}T$, and $\bar{h}$ can be  reconstructed from a compression set of points from $S$ of size 
$\Lambda_{\RE}T$.

Recall that $S' \subseteq S$ is a maximal subset 
such that $\inf_{h\in\H}\widehat{\risk}\paren{h;S'}=0$ which implies that $\widehat{\risk}\paren{\bar{h};S}\leq \inf_{h\in\H}\widehat{\risk}\paren{h;S}$.
Plugging it into a data-dependent compression generalization bound (Lemma \ref{lem:sample-compression-bernstein}), we obtain a sample complexity of $\Lambda_{\AG}\paren{\epsilon,\delta,\H} = 
\O\paren{\ \frac{\VC(\H)}{\epsilon^2}\log^2\paren{\frac{\VC(\H)}{\epsilon^2}}+\frac{1}{\epsilon^2}\log\frac{1}{\delta}}$, as mentioned in \cref{thm:partial-sample-complexity-agnostic}.
\section{Supervised robust learning algorithms}\label{app:algo-agnostic-robust}
We overview the algorithms of \citet[proofs of Theorems 4 and 8]{montasser2019vc}. Their construction is based on sample compression methods explored in \cite{hanneke2019sample,moran2016sample}.

Let $\H \subseteq \sett{0,1}^\X$, fix a distribution $\D$ over the input space $\X\times \Y$. Let $S = \sett{(x_1,y_1),\dots,(x_m,y_m)}$ be an i.i.d. training sample from a robustly realizable distribution $\D$ by $\H$ , namely $\inf_{h\in\H}\Risk_{\U}\paren{h;\D}=0$.
Denote $d = \VC(\H)$, $d^* = \dualVC(\H)$ is the \textit{dual VC-dimension}. Fix $\epsilon,\delta \in (0,1)$.
\begin{enumerate}[leftmargin=0.5cm]
    \item Define the inflated training data set 
    $$S_\U = \bigcup_{i
    \in [n]}\sett{(z,y_{I(z)}):z\in \U(x_i)},$$ 
    where $I(z) =\min\sett{i\in [n]:z\in \U(x_i)}$. The goal is to construct a compression scheme the is consistent with $S_\U$.
    \item Discretize $S_{\U}$ to a finite set $\bar{S}_\U$.
        Define the class of hypotheses with zero robust error on every $d$ points in $S$,
    $$\hat{\H}=\sett{\RERM_{\H}(S'): S'\subseteq S, |S'|=d},$$
    where $\RERM_\H$ maps any labeled set to a hypothesis in $\H$ with zero robust loss on this set. 
    The cardinality of this class is bounded as following
    $$|\hat{\H}|= {n \choose d}\leq \left(\frac{en}{d}\right)^{d}.$$
    Discretize $S_\U$ to a finite set using the finite class $\hat{\H}$. 
    Define the \textit{dual class} $\H^* \subseteq \sett{0,1}^\H$ of $\H$ as the set of all functions $f_{(x,y)}: \H \rightarrow \sett{0,1}$ defined by $f_{(x,y)}(h) = \I\sqparen{h(x)\neq y}$, for any $h\in \H$ and $(x,y)\in S_{\U}$. If we think of a binary matrix where the rows consist of the distinct hypotheses and the columns are points, then the dual class corresponds to the transposed matrix where the distinct rows are points and the columns are hypotheses. A discretization $\bar{S}_\U$ will be defined by the dual class of $\hat{\H}$. Formally, $\bar{S}_\U \subseteq S_\U$ consists of exactly one $(x,y)\in S_\U$ for each distinct classification $\sett{f_{(x,y)}(h)}_{h\in \hat{\H}}$. In other words, $\hat{\H}$ induces a finite partition of $S_\U$ into regions where every $\hat{h}\in \hat{\H}$ suffers a constant loss $\I\sqparen{\hat{h}(x)\neq y}$ in each region, and the discretization $\bar{S}_\U$ takes one point per region.
    By Sauer's lemma \citep{vapnik2015uniform,sauer1972density},  for $n > 2d$,
    $$|\bar{S}_\U| 
    \leq 
    \left( \frac{e|\hat{\H}|}{d^*}\right)^{d^*}
    \leq 
    \left( \frac{e^2n}{dd^*}\right)^{dd^*},$$
   
   \item
    Execute the following modified version of the algorithm $\alpha$-boost \citep[pages 162-163]{schapire2013boosting} on the discretized set $\bar{S}_\U$, with parameters $\alpha = \frac{1}{3}$ and number of boosting rounds $T=\O\paren{\log\paren{|\bar{S}_\U|}}=\O\left( dd^*\log (n)\right)$.
    
    \begin{algorithm}[H]
    \caption{Modified $\alpha$-boost}\label{alg:alpha-boost}
    \textbf{Input:} $\H, S, \bar{S}_\U, d, \RERM_\H$.\\
    \textbf{Parameters:} $\alpha, T$.\\
    \textbf{Initialize} $P_1$ = Uniform($\bar{S}_\U$).\\
     For $t=1,\ldots,T$:
        \begin{enumerate}
            \item Find $\O(d)$ points ${S_t}\subseteq \bar{S}_\U$ such that every $h\in \H$ with $\widehat{\risk}(h;S_t)=0$ has $\risk(h;P_t)\leq 1/3$.
            \item Let $S'_t$ be the original $\O(d)$ points in $S$ with $S_t\subseteq \bigcup_{(x,y)\in {S'_t}}\bigcup \sett{(z,y): z\in \U(x)}$.
            \item Let $\hat{h}_t = \RERM_\H(S'_t)$.
            \item For each $(x,y)\in \bar{S}_\U$:
            \begin{itemize}
                \item[] $P_{t+1}(x,y) \propto P_{t}(x,y)e^{-\alpha \I\sett{\hat{h}_t(x)=y}}$
            \end{itemize}
        \end{enumerate}
    \textbf{Output:} classifiers $\hat{h}_1,\ldots,\hat{h}_T$ and sets $S'_1,\ldots,S'_T$.
\end{algorithm}

    \item Output the majority vote
    $\bar{h}=\maj\paren{\hat{h}_1,\ldots,\hat{h}_T}$.
\end{enumerate}
We are guaranteed that  $\widehat{\risk}_{\U}\paren{\bar{h};S}=0$,
and each hypothesis $\hat{h}_t\in \sett{\hat{h}_1,\ldots,\hat{h}_T}$ is representable as set $S'_t$ of size $\O(d)$. This defines a compression function $\kappa(S)= \bigcup_{t\in [T]} S'_t$.
Thus, $\bar{h}$ can be  reconstructed from a compression set of size 
$$dT = \O\left(d^2d^*\log(n)\right).$$

This compression size can be further reduced to $\O\left(dd^*\right)$, using a sparsification technique introduced by \citet{moran2016sample,hanneke2019sample}, by randomly choosing $\O(d^*)$ hypotheses from $\sett{\hat{h}_1,\ldots,\hat{h}_T}$. The proof follows via standard uniform convergence argument. Plugging it into a compression generalization bound, we have a sample complexity of $\Tilde{\O}\paren{\frac{dd^*}{\epsilon}+\frac{\log\frac{1}{\delta}}{\epsilon}}$, in the realizable robust  case.

\paragraph{Agnostic case.}
The construction follows a reduction to the realizable case suggested by \citet{david2016supervised}. 
Denote $\Lambda_{\RE} = \Lambda_{\RE}(1/3,1/3,\H,\U)$, the sample complexity of $(1/3,1/3)$-$\pac$ learn $\H$ with respect to a perturbation function $\U$, in the realizable robust  case.

Using a robust $\ERM$, find the maximal subset $S'\subseteq S$
such that $\inf_{h\in\H}\widehat{\risk}_{\U}\paren{h;S'}=0$. Now, $\Lambda_{\RE}$ samples suffice for weak robust learning for any distribution $\D$ on $S'$.

Execute the $\alpha$-boost algorithm \citep[pages 162-163]{schapire2013boosting} on $S'$ for the robust loss function, with parameters $\alpha = \frac{1}{3}$ and number of boosting rounds $T=\O\paren{\log\paren{|S'|}}$, where each weak learner is trained on $\Lambda_{\RE}$ samples. The returned hypothesis
$\bar{h}=\maj\paren{\hat{h}_1,\ldots,\hat{h}_T}$
satisfies that  $\widehat{\risk}_{\U}\paren{\bar{h};S'}=0$, and each hypothesis $\hat{h}_t\in \sett{\hat{h}_1,\ldots,\hat{h}_T}$ is representable as set of size $\O(\Lambda_{\RE})$. This defines a compression scheme of size $\Lambda_{\RE}T$, and $\bar{h}$ can be  reconstructed from a compression set of points from $S$ of size 
$\Lambda_{\RE}T$.

Recall that $S' \subseteq S$ is a maximal subset 
such that $\inf_{h\in\H}\widehat{\risk}_{\U}\paren{h;S'}=0$ which implies that $\widehat{\risk}_{\U}\paren{\bar{h};S}\leq \inf_{h\in\H}\widehat{\risk}_{\U}\paren{h;S}$.
Plugging it into a compression generalization bound (Lemma \ref{lem:compression-generalization} holds for the robust loss function as well), we have a sample complexity of $\Tilde{\O}\paren{\frac{\Lambda_{\RE}}{\epsilon^2}+\frac{\log\frac{1}{\delta}}{\epsilon^2}}$, which translates into $\Tilde{\O}\paren{\frac{dd^*}{\epsilon^2}+\frac{\log\frac{1}{\delta}}{\epsilon^2}}$, in the agnostic robust case.

\end{document}